\documentclass[11pt]{article}

\usepackage[final]{acl}

\usepackage{times}
\usepackage{latexsym}

\usepackage[T1]{fontenc}

\usepackage[utf8]{inputenc}

\usepackage{microtype}

\usepackage{inconsolata}

\usepackage{graphicx}
\usepackage{subcaption}
\usepackage{amsmath}
\usepackage{algorithm}
\usepackage{algorithmic}
\usepackage{amssymb}
\usepackage{mathtools}
\usepackage{amsthm}
\usepackage{booktabs}
\usepackage[table]{xcolor}

\theoremstyle{plain}
\newtheorem{theorem}{Theorem}[section]

\theoremstyle{definition}

\theoremstyle{remark}

\title{Implicit Hierarchical GRPO: Decoupling Tool Invocation from Execution for Tool-Integrated Mathematical Reasoning}

\author{
 \textbf{Li Wang\textsuperscript{1}\thanks{Equal contribution}},
 \textbf{Xiaohan Wang\textsuperscript{1}\footnotemark[1]\thanks{Corresponding authors: Xiaohan Wang: \href{mailto:wangxiaohan17@meituan.com}{wangxiaohan17@meituan.com}, Guojun Yin: \href{mailto:yinguojun02@meituan.com}{yinguojun02@meituan.com}}},
 \textbf{Xiaodong Lu\textsuperscript{1}},
 \textbf{Zipeng Zhang\textsuperscript{2}},
 \textbf{Jinyang Wu\textsuperscript{2}},
\\
 \textbf{Jiajun Chai\textsuperscript{1}},
 \textbf{Wei Lin\textsuperscript{1}},
 \textbf{Guojun Yin\textsuperscript{1}\footnotemark[2]}
\\
 \textsuperscript{1}Meituan, \textsuperscript{2}Tsinghua University
}
\begin{document}
\maketitle

\begin{abstract}
Large language models (LLMs) have increasingly leveraged tool invocation to enhance their reasoning capabilities. However, existing approaches typically tightly couple tool invocation with immediate execution. Such immediate tool interaction may disrupt the reasoning coherence of LLMs and constrain their expressivity, ultimately degrading reasoning performance. To this end, for the first time, we propose and formalize the problem of decoupling tool invocation from execution during reasoning, and introduce delayed execution with explicit control to enhance tool-integrated reasoning (TIR). Furthermore, we propose a hierarchical control framework and theoretically derive a surrogate loss that enables an implicitly hierarchical policy to learn behavior equivalent to that of an explicit hierarchical policy, leading to the proposed IH-GRPO algorithm. Extensive experiments on IH-GRPO achieve absolute improvements of 1.87\%, 2.16\%, and 2.53\% on Qwen3-1.7B, Qwen3-4B, and Qwen3-8B across six out-of-domain mathematical reasoning benchmarks over the strongest baseline method, while also yielding consistent performance gains in other domains. Our code is available at 
\url{https://anonymous.4open.science/r/IH-GRPO-172F}.
\end{abstract}

\section{Introduction}
Reinforcement learning (RL) has become a key paradigm for enhancing the reasoning capabilities of large language models (LLMs). In particular, reinforcement learning with verifiable rewards (RLVR) has successfully induced explicit reasoning behaviors~\cite{shao2024deepseekmath}. To further improve LLM reasoning, recent work has increasingly explored agentic reasoning, where LLMs interact with external tools or feedback during problem solving. Tool-integrated reasoning (TIR) instantiates this paradigm by incorporating external tools into the reasoning process, allowing LLMs to tackle a broader range of problems. In mathematical reasoning~\cite{lu2026contextual,lin2026resrl,yang2026your}, code is a powerful tool, enabling precise computation, verification, and the use of structural properties such as recursion to simplify problems and improve performance. Accordingly, recent work increasingly integrates code to strengthen mathematical reasoning in LLMs.

\begin{figure*}[t]
    \centering
    \includegraphics[width=0.95\textwidth, keepaspectratio]{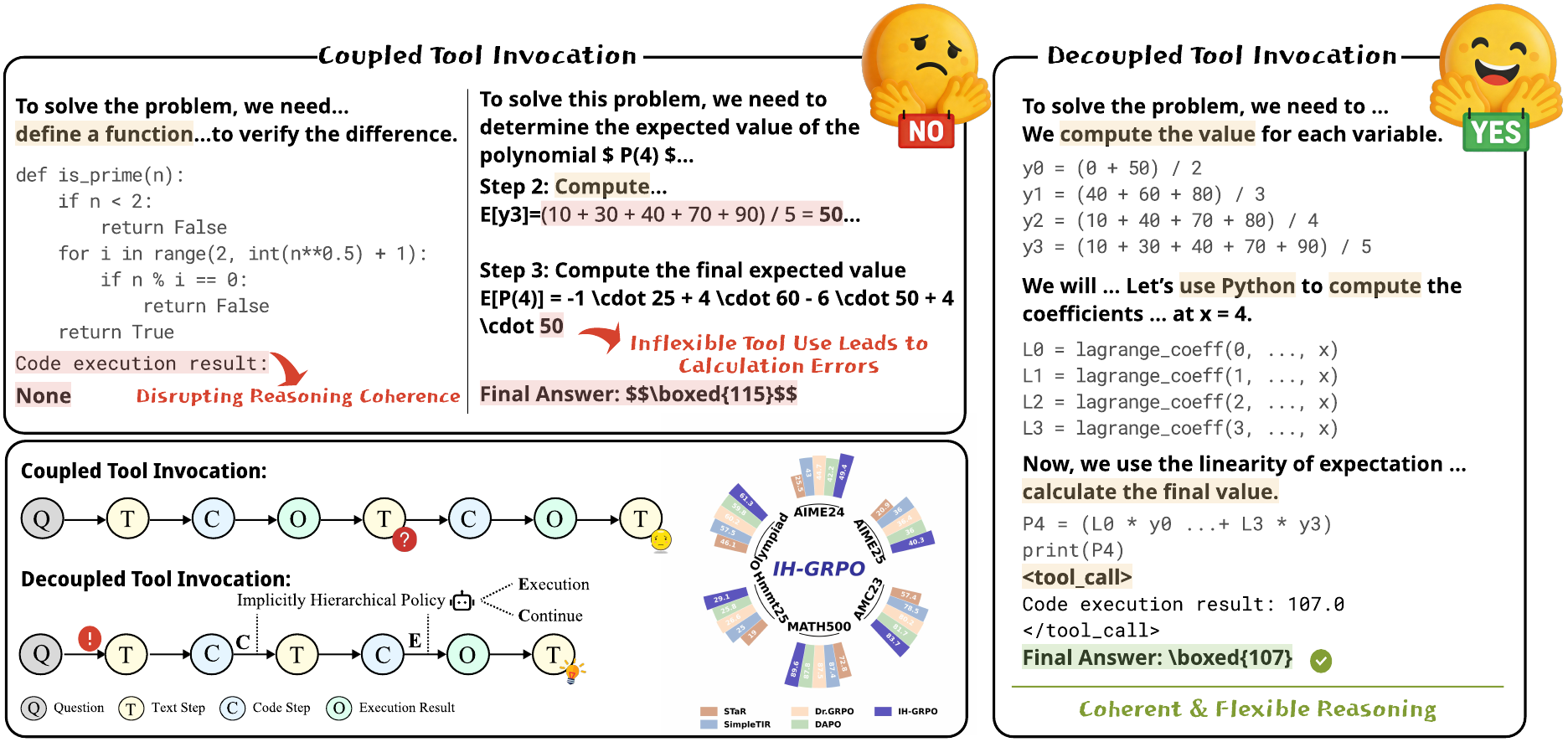}
    \caption{\textbf{(Top-left)} Coupled tool invocation triggers immediate function calls, leading to empty outputs, disrupted reasoning coherence, and premature termination due to hallucinated results. In complex calculations, manual computation is often error-prone. Rigid tool-use patterns prevent the model from flexibly leveraging code tools to handle intermediate computational steps, thereby increasing the likelihood of reasoning errors. \textbf{(Right)} Decoupled tool invocation facilitates tool-assisted intermediate steps, ensuring coherent and flexible reasoning. \textbf{(Bottom-left)} A comparison between coupled and decoupled tool invocation. IH-GRPO delivers superior performance across six mathematical reasoning datasets on Qwen3-1.7B/4B/8B.}
    \label{fig:motivation}
\end{figure*} 

Despite the benefits of code-assisted reasoning, existing work~\cite{li2025torl,xue2025simpletir,li2025teaching,wu2025templaterl} often lacks an explicit notion of when external feedback is needed, which may limit the effectiveness of TIR. Since the execution result of a code tool serves as external feedback to the model, its timing is critical: premature feedback may interrupt the model's ongoing deliberation and disrupt the coherence of intermediate reasoning. Nevertheless, existing methods typically couple tool invocation with immediate execution, without a mechanism for deciding whether a generated code fragment should be executed to elicit feedback. As illustrated in Figure~\ref{fig:motivation}, once a code block is detected in the model output, it is executed eagerly, even when the model is merely defining intermediate functions or variables rather than requesting external feedback. Moreover, tightly coupling code generation with execution implicitly requires each code block to be a relatively complete executable unit, limiting the model's ability to interleave textual reasoning across multiple code fragments. This constraint may limit the expressivity and diversity of reasoning patterns, potentially degrading overall performance. A detailed statistical analysis is provided in Section~\ref{sec:system_tool_analysis}.

To address these limitations, we propose and formalize the new problem of decoupling tool invocation from execution in TIR. As shown in Figure~\ref{fig:motivation}, all generated code is deferred by default, allowing the model to autonomously decide when to execute. This design enables flexible interleaving of textual and code-based reasoning without breaking reasoning continuity. It also allows the model to verify and reflect on generated code before execution, and enhances the model's expressivity and diversity of reasoning patterns. For additional examples, see Appendix~\ref{appendix:case_study}. Furthermore, to enable finer-grained control over tool execution, we introduce
a hierarchical control mechanism inspired by hierarchical learning and
theoretically derive a surrogate loss that enables an implicitly hierarchical
policy to emulate the training dynamics of an explicit one. By inducing
endogenous hierarchical control through an objective-level correction, our
method avoids relying on an external controller model or hand-crafted execution
workflows, offering greater flexibility and leading to the IH-GRPO algorithm. Experiments on multiple Qwen3 models show that IH-GRPO substantially improves mathematical reasoning across six benchmarks while also enhancing broader model capabilities. Our contributions can be summarized as follows:
\begin{itemize}
    \item \textbf{Decoupling tool invocation and execution:} Starting from the identified limitation in existing TIR, which is the lack of awareness of when external feedback is needed and can impair reasoning, we formalize and analyze the new problem of decoupling tool invocation and execution. To the best of our knowledge, we are the first to formalize this problem.
    
    \item \textbf{Hierarchical Policy Training via IH-GRPO:} Inspired by the success of hierarchical learning, we introduce hierarchical control and theoretically derive a surrogate loss, enabling implicit hierarchical policies to emulate explicit ones without altering the model architecture or incurring the training and inference overhead of explicit hierarchy. We further propose the IH-GRPO algorithm, which adds only a loss term to achieve more efficient hierarchical control of tool invocation and execution.
    
    \item \textbf{Strong Reasoning Performance:} We evaluate our approach across six out-of-distribution mathematical reasoning benchmarks. Compared to the strongest baseline, IH-GRPO achieves improvements of 1.87\%, 2.16\%, and 2.53\% on the Qwen3-1.7B, Qwen3-4B, and Qwen3-8B models, respectively, while also showing gains in other domains, thereby validating the effectiveness of IH-GRPO.
\end{itemize}

\section{Further Analysis on Tool Invocation}
\label{sec:system_tool_analysis}
We compare coupled and decoupled invocation by analyzing 768 randomly sampled Qwen3-8B responses from the corresponding training settings.\footnote{Statistics were obtained with GPT-5.5 and matched manual checks on a random 10\% subset.}

\textbf{Inference Coherence}: As shown in Figure~\ref{fig:motivation} (left), the coupled setting's lack of control over tool invocation timing led to 4.98\% of responses being interrupted by unexpected external returns, resulting in hallucinated outputs and incorrect answers. In contrast, the decoupled setting, with improved control over invocation timing, enhanced inference coherence and reduced interruptions to 0.68\%.

\textbf{Diversity of Tool Usage Patterns}: As shown in Figure~\ref{fig:compare_tool_usage}, in the coupled setting, each
code block is expected to form a relatively complete executable code segment. This requirement encourages the model to defer tool invocation to the end of the reasoning process, where tools are often used for redundant verification or even merely to produce the final answer, resulting in inefficient tool usage. In contrast, the decoupled setting enables the model to interleave textual and code-based reasoning more flexibly, allowing code to replace intermediate steps that involve complex computation. This shifts tool usage toward more substantive intermediate reasoning, substantially reducing redundant verification and yielding a more flexible and effective tool-use pattern.

\begin{figure}[t]
    \centering
    \includegraphics[width=1.0\columnwidth]{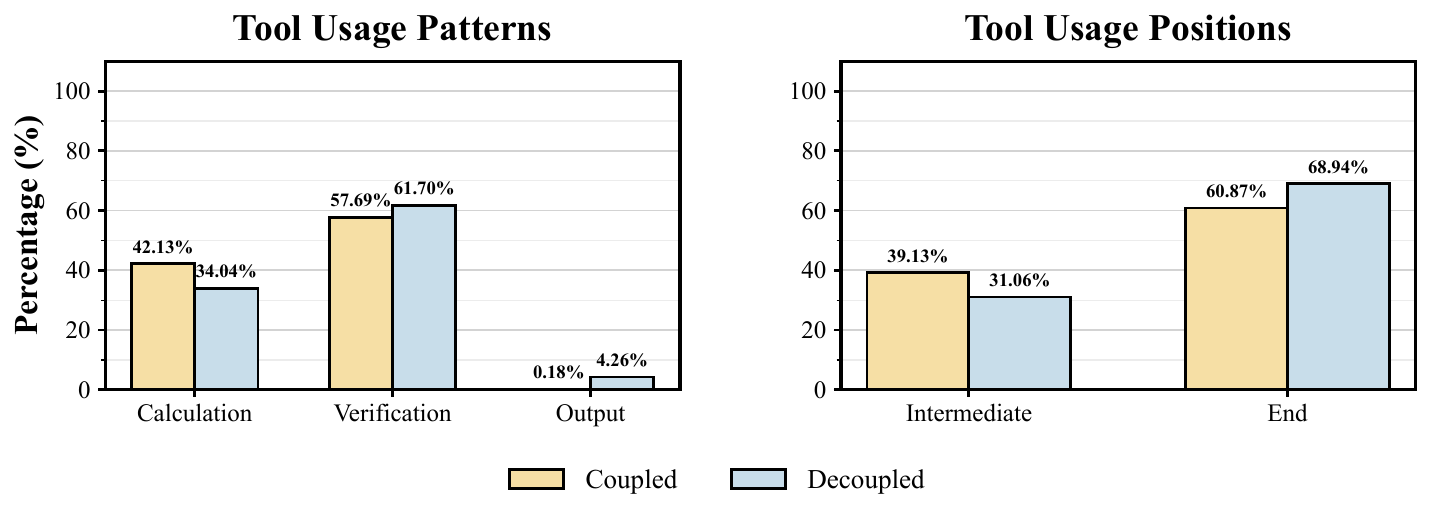}
    \caption{Comparison of invocation methods: \textbf{(Left)} tool usage patterns, \textbf{(Right)} tool positions.}
    \label{fig:compare_tool_usage}
\end{figure}

\section{Preliminary}
\subsection{Group Relative Policy Optimization}
In this section, we introduce the group relative policy optimization (GRPO)~\cite{shao2024deepseekmath} algorithm. Given a dataset \( D \) with questions \( q \) and corresponding ground-truth answers \( a \), GRPO samples rollout trajectories \( \{o_1, \dots, o_G\} \) from the old policy \( \pi_{\theta_{old}} \) and optimizes the current policy \( \pi_{\theta} \) by maximizing the following objective:
\begin{align} & \mathcal{J}_{\text{GRPO}}(\theta) = \mathbb{E}_{q \sim D, \{o_i\} \sim \pi_{\theta}} \Bigg[ \frac{1}{G} \sum_{i=1}^G \frac{1}{|o_i|} \sum_{t=1}^{|o_i|} s_{i,t} \Bigg], \nonumber \\ 
& s_{i,t} = \min \left( \rho_{i,t} A_{i,t}, \text{clip} \left( \rho_{i,t}, 1 - \epsilon, 1 + \epsilon \right) A_{i,t} \right), \nonumber \\
& \rho_{i,t} = \frac{\pi_{\theta}(o_{i,t} | q, o_{i,<t})}{\pi_{\theta_{\text{old}}}(o_{i,t} | q, o_{i,<t})}, 
\label{equ:grpo_loss} 
\end{align}
where \(\epsilon\) is a hyperparameter that controls the clipping range of the importance-sampling ratio, and the KL-penalty term is omitted in our objective. The advantage term \( A_{i,t} \) is given by:
\begin{equation}
A_{i,t} = \frac{r_i - \text{mean}(\{r_1, \dots, r_G\})}{\text{std}(\{r_1, \dots, r_G\})}
\end{equation}
Here, \( r_i \) is the reward for trajectory \( o_i \), which is evaluated by a rule-based verifier.

\subsection{Types of Reasoning Paths}
Reasoning steps can be classified into three distinct types (Figure~\ref{fig:node_type_show}):

\begin{itemize}
    \item \textbf{Sequential Step:} Execution does not affect subsequent reasoning paths and can be deferred until a later critical step.
    \item \textbf{Branching Step:} Execution determines subsequent reasoning paths and should be performed immediately.
    \item \textbf{Output Step:} Generate the final answer and should be executed immediately.
\end{itemize}

In previous code-augmented mathematical reasoning, nearly all code is executed immediately upon generation, preventing the model from regulating tool execution. However, based on the step types above, some executions can be delayed. To address this limitation, we explicitly distinguish between \emph{critical steps}, which include branching and output steps, and \emph{non-critical steps}, which are sequential steps. By deferring the execution of sequential steps and merging them with subsequent critical steps, our approach allows the model to control tool execution, thereby decoupling tool invocation from execution.

\begin{figure}[t]
    \centering
    \includegraphics[width=0.8\columnwidth]{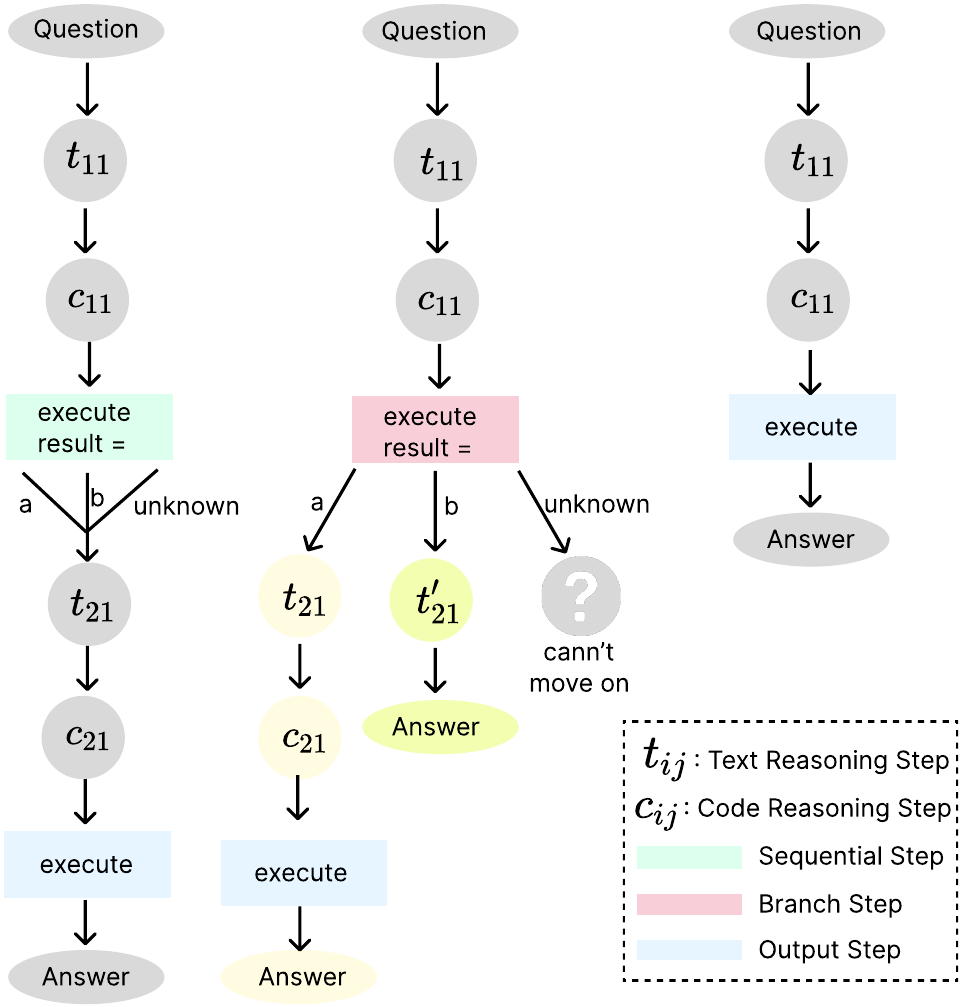}
    \caption{Different step types in the reasoning process.}
    \label{fig:node_type_show}
\end{figure}

\section{Method}
We introduce the problem of decoupling tool invocation from tool execution and
formalize it in Sec.~\ref{sec:problem_formulation}. To enable explicit control
over tool execution, we propose a hierarchical policy structure in
Sec.~\ref{sec:hie_control}. We then derive the IH-GRPO objective and discuss the
corresponding reward design in Sec.~\ref{sec:ih-grpo}. The following sections
provide a detailed description of these components.

\subsection{Problem Formulation}
\label{sec:problem_formulation}
We propose a new problem of decoupling tool invocation and execution and introduce a delayed execution mechanism with explicit control. The reasoning trajectory interleaves textual and code steps, with tool execution deferred and selectively triggered. Formally, the trajectory is defined as:
\begin{align}
\tilde{\tau} = (&t_{11}, c_{11}, \dots, e_1, o_1, \notag\\
                &\dots, t_{l i}, c_{l i}, \dots, t_{l s}, c_{l s}, e_l, o_l, \dots, t_{n1}),
\end{align}
where $n$ denotes the total number of interaction turns, and $t_{l i}$ and $c_{l i}$ represent the $i$-th interleaved text and code reasoning steps in turn $l$. The execution signal $e_l$ concatenates all previously unexecuted code blocks $\{c_{l1}, \dots, c_{l s}\}$ accumulated since the last execution signal $e_{l-1}$ into a single executable block, whose execution yields an observation $o_l$ that is appended to the context. At each step, the model conditions on the entire history of generated text, code, and observations to produce subsequent reasoning steps, repeating this process until the final turn $n$ terminates with a textual reasoning step $t_{n1}$ that contains the final answer. The likelihood of generating such a trajectory under an LLM policy $\pi_\theta$ parameterized by $\theta$ factorizes as:
\begin{align}
P_{\pi_\theta}(\tilde{\tau} \mid q) &= \prod_{l=1}^{n} \Bigg[ \Bigg[ \prod_{i=1}^{l_s} P_{\pi_\theta}\!\Big( t_{l i} \,\big|\, q, \mathcal{H}^{<t_{l i}} \Big) \notag\\
 &\quad \cdot P_{\pi_\theta}\!\Big( c_{l i} \,\big|\, t_{l i}, q, \mathcal{H}^{<c_{l i}} \Big) \Bigg] \notag\\
 &\quad \cdot P_{\pi_\theta}\!\Big( e_l \,\big|\, q, \mathcal{H}^{<e_l} \Big)\Bigg],
\end{align}
where $q$ denotes the input problem together with the model’s guiding prompt, $\mathcal{H}^{<x}$ denotes the complete history prior to step $x$.

\subsection{Hierarchical Control of Tool Execution}
\label{sec:hie_control}
Building on the decoupling of tool invocation and execution, the model is required to autonomously decide when tool execution is needed to incorporate external observations. Inspired by hierarchical control frameworks~\cite{zhang2025agent, chen2025enhancing}, we introduce a hierarchical control strategy for execution-time decision making. In this framework, a high-level policy determines whether to execute a tool, while a low-level policy governs token-level generation when the tool is not executed. Formally, let $S$ denote the current reasoning context. The high-level policy $\pi_{\text{high}}$, parameterized by $\theta_0$, controls whether to perform a real tool execution. We denote the high-level actions $a_{\text{high}}$ as $C$ (continue reasoning) and $E$ (execute), with the corresponding action distribution defined as:
\begin{equation}
\pi_{\text{high}}(a_{high} \mid S) =
\begin{cases}
\sigma(\theta_0), & \text{if } C\\
1 - \sigma(\theta_0), & \text{if } E,
\end{cases}
\end{equation}
where $\sigma(\cdot)$ denotes the sigmoid function. For the low-level policy $\pi_{\text{low}}$, when the high-level policy decides not to execute the tool, it generates the next token in an autoregressive manner. Let $\theta_i$ ($i=1,\dots,V$) denote the parameters of the low-level policy, where $V$ is the vocabulary size. The token distribution is given by:
\begin{equation}
\pi_{\text{low}}(\theta_i \mid S,a_{high}=C)
= \frac{\exp(\theta_{i})}{\sum_{u=1}^{V} \exp(\theta_{u})}, i \geq 1.
\end{equation}
Thus, the joint policy distribution of the explicit hierarchical strategy $\pi_E$ is given by:
\begin{equation}
\pi_E(\theta_i \mid S) =
\begin{cases}
\sigma(\theta_0) \cdot \frac{\exp(\theta_{i})}{\sum_{u=1}^{V} \exp(\theta_{u})}, & \text{if } C, \, i \ge 1,\\[2mm]
1 - \sigma(\theta_0), & \text{if } E.
\end{cases}
\end{equation}

However, the explicit hierarchical strategy requires joint decision-making by two policies, introducing dual control at each reasoning step and substantially increasing both training and inference costs. To alleviate this inefficiency, we propose an implicit hierarchical strategy that approximates the behavior of the explicit hierarchical model without relying on an explicit high-level policy. In an autoregressive language model, the tool-execution token is treated identically to other tokens, naturally forming an implicit hierarchy. Let the model parameters be $\{\beta_0, \beta_1, \dots, \beta_V\}$, where $\beta_0$ corresponds to the tool-execution token and $V+1$ is the vocabulary size. The implicit hierarchy policy distribution $\pi_I$ can be expressed as:
\begin{equation}
\pi_I(\beta_i \mid S) = \frac{e^{\beta_i}}{\sum_{s=0}^{V} e^{\beta_s}}
\end{equation}
Our objective is to enable the implicit hierarchical policy to learn the behavior of the explicit policy, achieving efficient training and inference without modeling the high-level policy. At initialization, there exists an explicit hierarchical policy \( \pi_E(\theta_i) \) such that the output probabilities of both policies are identical for every token:
\begin{equation}
\pi_E(\theta_i \mid S) = \pi_I(\beta_i \mid S).
\end{equation}
However, after one update, the two distributions typically diverge due to differences in control structures. To maintain equivalence during optimization, we introduce a surrogate loss that ensures the updated explicit policy \( \pi_E' \) remains consistent with the updated implicit policy \( \pi_I' \). We theoretically derive the following theorem by analyzing the policy gradient updates of both strategies.

\begin{theorem}[Equivalence of One-Step Updates Between Explicit and Implicit Hierarchical Policies]
\label{thm:equivalence}
There exists an explicit hierarchical policy $\pi_E(\theta_i)$ that is equivalent to the initial implicit hierarchical policy $\pi_I(\beta_i)$, i.e.,
\begin{equation}
\forall i, \quad \pi_E(\theta_i \mid S) = \pi_I(\beta_i \mid S)
\end{equation}
Then, under a single policy-gradient update with learning rate $\eta$, 
the equivalence between $\pi_E(\theta_i \mid S)$ and $\pi_I(\beta_i \mid S)$ can be preserved by optimizing the following 
surrogate loss $L_I'(\beta_i)$ for the implicit policy:
\begin{align}
& L_I'(\beta_i) = - A \left[ \beta_i - \log \left( \sum_{s=0}^V e^{\beta_s} \right) \right] - A \cdot \text{sg}(\gamma_i) \nonumber \\
& \quad \cdot \log Z_i + \left( A \log Z_i - f_i \cdot \beta_0 \right) \cdot \mathbb{I}\{i \geq 1\}, \nonumber \\
& Z'_i = \sum_{s=1}^V \exp\left( \beta_s + \eta A \left( \delta_{s i} - \text{softmax}_{1-V}(\beta_s) \right) \right), \nonumber \\
& Z_i = \sum_{s=1}^V e^{\beta_s}, \gamma_i = \frac{Z_i}{e^{\beta_0} + Z_i}, f_i = \frac{1}{\eta} \ln \left( \text{sg}(\frac{Z'_i}{Z_i}) \right),
\end{align}
where $A$ is the advantage and $\text{sg}(\cdot)$ denotes the stop-gradient operation.
\end{theorem}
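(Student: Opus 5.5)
The plan is to compute both one-step updates in closed form and then compare them through two quantities that together determine a distribution of the form $\pi_E$: the high-level log-odds $\log\bigl(\pi(E)/\pi(C)\bigr)$, and the conditional distribution over the ordinary tokens $i\ge 1$ given $C$. Throughout I take a policy-gradient step to mean one step of gradient descent on the per-sample loss $-A\log\pi(i\mid S)$ for the sampled token $i$.

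\textbf{Step 1 (initial equivalence).} I first make the initial equivalence concrete. It holds if and only if
\[
\sigma(\theta_0)=\frac{Z}{e^{\beta_0}+Z}=\gamma, \qquad \theta_s=\beta_s+c \quad (s\ge 1),
\]
where $Z=\sum_{s\ge 1}e^{\beta_s}$ and $c$ is any constant. Because the softmax is invariant to a common shift, I can take $c=0$, so that $\mathrm{softmax}(\theta)=\mathrm{softmax}_{1-V}(\beta)$. I also record that $\theta_0=\log Z-\beta_0$. This identity is what later allows every update of $\theta_0$ to be rewritten as an update of $\beta_0$ relative to $\log Z$.

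\textbf{Step 2 (explicit update).} Next I compute the explicit update, splitting into two cases.
\begin{itemize}
\item If the sampled token is $i=0$ (action $E$), then $\log\pi_E=\log(1-\sigma(\theta_0))$. The update is $\theta_0'=\theta_0-\eta A\sigma(\theta_0)$, and the low-level logits do not change.
\item If the sampled token is $i\ge1$, then $\log\pi_E=\log\sigma(\theta_0)+\theta_i-\log\sum_u e^{\theta_u}$. The updates are $\theta_0'=\theta_0+\eta A(1-\sigma(\theta_0))$ and $\theta_s'=\theta_s+\eta A\bigl(\delta_{si}-\mathrm{softmax}(\theta)_s\bigr)$.
\end{itemize}
In the second case the new low-level partition function is exactly $Z_i'$ as defined in the statement, since $\theta_s=\beta_s$. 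The updated explicit distribution is therefore characterized by
\[
\text{log-odds}'=-\theta_0', \qquad \text{conditional}'=\mathrm{softmax}(\theta').
\]

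\textbf{Step 3 (implicit update under $L_I'$).} I then differentiate $L_I'$ with respect to each $\beta_s$, treating $\gamma_i$ and $Z_i'/Z_i$ as constants because of the stop-gradients.
\begin{itemize}
\item The first term gives the ordinary softmax policy gradient $-A\bigl(\delta_{si}-\pi_I(s)\bigr)$.
\item For $s\ge1$, the term $-A\,\mathrm{sg}(\gamma_i)\log Z_i$ contributes $-A\gamma_i\,\mathrm{softmax}_{1-V}(\beta)_s$. Since $\pi_I(s)=\gamma_i\,\mathrm{softmax}_{1-V}(\beta)_s$, this cancels the normalizer part of the plain gradient over the ordinary tokens.
\item When $i\ge1$, the indicator term adds $+A\,\mathrm{softmax}_{1-V}(\beta)_s$ for $s\ge1$ and $-f_i$ for $s=0$.
\end{itemize}
Collecting terms, I expect that for $i\ge1$ the ordinary logits receive exactly the explicit low-level step, $\beta_s'=\beta_s+\eta A\bigl(\delta_{si}-\mathrm{softmax}_{1-V}(\beta)_s\bigr)$. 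This already gives $\mathrm{softmax}_{1-V}(\beta')=\mathrm{softmax}(\theta')$, and also $\sum_{s\ge1}e^{\beta_s'}=Z_i'$. For $s=0$ the total step is $\beta_0'=\beta_0-\eta A(1-\gamma_i)+\eta f_i=\beta_0-\eta A(1-\gamma_i)+\ln(Z_i'/Z_i)$.

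\textbf{Step 4 (matching the log-odds).} The implicit log-odds after the update are $\beta_0'-\log Z_i'=\beta_0-\log Z_i-\eta A(1-\gamma_i)$. This equals $-\theta_0-\eta A(1-\sigma(\theta_0))=-\theta_0'$, as required. So the $f_i\beta_0$ term serves precisely to absorb the exact, not merely first-order, change in the low-level partition function.

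For $i=0$, I will check that the $\mathbb{I}$-term is absent and that the $\gamma$-term still cancels the drift of the ordinary logits, so that $\beta_s'-\beta_s$ is constant over $s\ge1$ and the conditional is unchanged. I will then check that $\beta_0'-\log Z'=-\theta_0+\eta A\sigma(\theta_0)$ to first order in $\eta$.

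\textbf{Main obstacle.} I expect the hardest part to be this $i=0$ bookkeeping, together with the exact meaning of $\log Z_i$ inside the gradient: whether $\log Z$ is differentiated or partly stopped, and whether equivalence is exact or only holds to first order in $\eta$. My first attempt will be to verify the identity exactly for $i\ge1$, where the $\ln(Z'/Z)$ construction suggests exactness. For $i=0$ I will accept agreement up to $O(\eta^2)$ if exactness fails, using $\log\bigl(\sum e^{\beta_s+\eta A\,\cdot}\bigr)$ linearization.
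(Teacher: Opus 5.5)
Your proposal is correct and follows essentially the same route as the paper. Both set $\theta_s=\beta_s$ and $\theta_0=\log Z-\beta_0$, compute the explicit and surrogate gradient steps separately for $i=0$ and $i\ge 1$, and verify that the gate and the low-level softmax are both preserved; the paper states the gate condition as $\sigma(\theta_0')=\gamma'$, which is equivalent to your log-odds. One adjustment: your hedge for $i=0$ is unnecessary, because there the $\mathrm{sg}(\gamma_i)\log Z_i$ term cancels the ordinary-logit gradient identically, so $\beta_s'=\beta_s$ for $s\ge 1$, $Z'=Z$, and $\beta_0'-\log Z=-\theta_0+\eta A\sigma(\theta_0)=-\theta_0'$ holds exactly, not just to first order in $\eta$.
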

\textbf{Remark.} The proof is provided in Appendix~\ref{appendix:imp_hie_proof}. Under this formulation, after each gradient update, $L_I'(\beta_i)$ ensures that $\pi_E'(\theta' \mid S) = \pi_I'(\beta' \mid S)$, thereby maintaining the equivalence between the updated explicit and implicit hierarchical policies throughout the training process. 

\subsection{IH-GRPO Algorithm}
\label{sec:ih-grpo}
In Theorem~\ref{thm:equivalence}, assuming small parameter updates, $Z' \approx Z$, and thus $f \approx 0$, which is reasonable in post-training RL with a small learning rate~\cite{sheng2024hybridflow}. To streamline the derivation, we omitted KL penalties, a common practice in tool-use scenarios to encourage exploration, as noted in prior works~\cite{xue2025simpletir,bai2025towards,dong2025agentic}. Our surrogate loss derivation ensures equivalence between the two hierarchical policies at each update step, making explicit consideration of clipping unnecessary.

\paragraph{IH-GRPO objective.} Without loss of generality, let the 0-th token control tool execution. Based on this, we extend the GRPO objective to incorporate the hierarchical correction term, leading to the implicit hierarchical group relative policy optimization (IH-GRPO) objective:
\begin{align}
& \mathcal{J}_{\text{IH-GRPO}}(\theta) = \mathbb{E}_{q \sim D, \{o_i\} \sim \pi_{\theta}} \Bigg[ \frac{1}{G} \sum_{i=1}^G  \frac{1}{|o_i|} \sum_{t=1}^{|o_i|} \nonumber \\
& \quad \Big( s_{i,t} + \lambda \cdot \text{sg}(s_{i,t}) \cdot c_{i,t} \Big) \Bigg], \nonumber \\
& c_{i,t} = \text{sg}(\gamma_{i,t}) \cdot \log Z_{i,t} - \log Z_{i,t} \cdot \mathbb{I}\{o_{i,t} \neq \text{0}\}, \nonumber \\
& Z_{i,t} = \sum_{s=1}^V e^{\beta_{i,t,s}}, \gamma_{i,t} = \frac{Z_{i,t}}{e^{\beta_{i,t,0}} + Z_{i,t}},
\label{equ:ih_grpo_loss}
\end{align}
where $\beta_{i,t,s}$ denotes the output logit of the LLM for the $s$-th token
in the vocabulary at position $t$ of the $i$-th rollout, $\lambda$ is a coefficient that balances the hierarchical correction term and the standard GRPO objective. By optimizing $\mathcal{J}_{\text{IH-GRPO}}$, the implicit policy is able to approximate the behavior of an explicit hierarchical strategy, thereby enabling more effective control over tool execution while avoiding the computational and inference overhead associated with training an explicit high-level policy, improving overall reasoning performance. The full algorithm is given in Appendix~\ref{appendix:pseudocode}.

\paragraph{Reward design.} We use an outcome correctness reward to encourage correct answers by evaluating whether the model’s output matches the ground truth:
\begin{equation}
R_{\text{correct}} =
\begin{cases}
1, & \text{if match},\\
0, & \text{otherwise}.
\end{cases}
\end{equation}

\begin{table*}[!ht]
\centering
\caption{Evaluation results of different methods. `D` denotes decoupled tool invocation, and `C` denotes coupled tool invocation. The \textbf{bold} and \underline{underline} indicate the best and second-best results, respectively.}
\small
\renewcommand{\arraystretch}{1.1}
\setlength{\tabcolsep}{0pt}
\begin{tabular*}{\textwidth}{@{\extracolsep{\fill}}lcccccccc}
\toprule
\textbf{Method} & \textbf{Tool} & \textbf{AIME24} & \textbf{AIME25} & \textbf{MATH500} & \textbf{AMC23} & \textbf{Hmmt25} & \textbf{Olympiad} & \textbf{Average} \\
\midrule
\rowcolor{gray!8}\multicolumn{9}{c}{\textit{Models based on Qwen3-1.7B}} \\
\midrule
Base & No & 22.50 & 21.77 & 81.05 & 57.89 & 9.58 & 43.30 & 39.35 \\
STaR & C & 11.25 & 12.92 & 64.72 & 43.36 & 9.27 & 35.93 & 29.57 \\
STaR & D & 11.67 & 11.46 & 58.82 & 38.12 & 8.44 & 32.55 & 26.84 \\
SimpleTIR & C & 26.67 & 24.06 & \textbf{83.97} & 60.70 & 9.69 & 46.30 & 41.90 \\
SimpleTIR & D & 29.27 & \underline{27.19} & 81.70 & 63.67 & 16.56 & 49.18 & 44.60 \\
Dr.GRPO & C & 27.92 & 25.42 & 79.80 & 64.38 & \underline{18.33} & \textbf{51.57} & 44.57 \\
Dr.GRPO & D & \underline{29.37} & 26.25 & 82.20 & 62.66 & 16.98 & 51.13 & \underline{44.77} \\
DAPO & C & 25.73 & 26.56 & 83.75 & \textbf{67.73} & 14.06 & 50.77 & \underline{44.77} \\
DAPO & D & 24.90 & 23.54 & 78.78 & 60.08 & 15.83 & 46.55 & 41.61 \\
EH-GRPO & D & 27.81 & 22.60 & 82.02 & 64.06 & 17.40 & 50.17 & 44.01 \\
\rowcolor[RGB]{236,244,252}
\textbf{IH-GRPO (ours)} & D & \textbf{31.15} & \textbf{27.50} & \underline{83.80} & \underline{66.87} & \textbf{18.96} & \underline{51.55} & \textbf{46.64} \\
\midrule
\rowcolor{gray!8}\multicolumn{9}{c}{\textit{Models based on Qwen3-4B}} \\
\midrule
Base & No & 34.17 & 24.69 & 86.35 & 70.70 & 12.50 & 47.33 & 45.96 \\
STaR & C & 26.35 & 22.60 & 73.77 & 61.09 & 20.62 & 48.93 & 42.23 \\
STaR & D & 32.40 & 23.65 & 78.47 & 68.12 & 24.69 & 49.72 & 46.18 \\
SimpleTIR & C & 49.27 & \underline{43.75} & 87.88 & 85.31 & 31.67 & 60.93 & 59.80 \\
SimpleTIR & D & \underline{51.98} & 41.25 & \underline{91.33} & 89.14 & 32.29 & 63.60 & \underline{61.60} \\
Dr.GRPO & C & 49.69 & 42.71 & 90.30 & 87.11 & 30.42 & 63.90 & 60.69 \\
Dr.GRPO & D & 51.67 & 43.65 & 88.70 & 86.02 & \textbf{35.73} & 63.17 & 61.49 \\
DAPO & C & 49.38 & 42.60 & 91.15 & 88.83 & 31.87 & \underline{64.57} & 61.40 \\
DAPO & D & 50.10 & 40.83 & 91.10 & \underline{89.30} & 30.63 & 63.15 & 60.85 \\
EH-GRPO & D & 41.77 & 35.31 & 90.25 & 83.28 & 26.98 & 61.92 & 56.59 \\
\rowcolor[RGB]{236,244,252}
\textbf{IH-GRPO (ours)} & D & \textbf{55.31} & \textbf{45.83} & \textbf{91.45} & \textbf{90.00} & \underline{35.10} & \textbf{64.88} & \textbf{63.76} \\
\midrule
\rowcolor{gray!8}\multicolumn{9}{c}{\textit{Models based on Qwen3-8B}} \\
\midrule
Base & No & 34.69 & 23.65 & 84.80 & 66.80 & 11.04 & 46.47 & 44.58 \\
STaR & C & 38.85 & 27.29 & 79.85 & 67.73 & 26.98 & 53.55 & 49.04 \\
STaR & D & 40.83 & 25.00 & 80.55 & 67.42 & 28.65 & 52.32 & 49.13 \\
SimpleTIR & C & 52.92 & 40.31 & 90.25 & 89.53 & \textbf{33.54} & 65.28 & 61.97 \\
SimpleTIR & D & \underline{58.33} & \underline{44.69} & 92.52 & 91.02 & 31.35 & 64.57 & \underline{63.75} \\
Dr.GRPO & C & 56.35 & 41.15 & 92.50 & 89.14 & 31.04 & 65.07 & 62.54 \\
Dr.GRPO & D & 57.50 & 41.35 & 92.70 & 90.55 & 31.98 & \underline{66.50} & 63.43 \\
DAPO & C & 51.56 & 38.75 & 88.52 & 88.67 & 31.46 & 64.07 & 60.51 \\
DAPO & D & 56.15 & 43.23 & 92.35 & \underline{91.56} & 32.81 & 64.50 & 63.43 \\
EH-GRPO & D & 55.42 & 43.02 & \underline{93.40} & 89.14 & 27.81 & 64.10 & 62.15 \\
\rowcolor[RGB]{236,244,252}
\textbf{IH-GRPO (ours)} & D & \textbf{61.67} & \textbf{47.50} & \textbf{93.50} & \textbf{94.30} & \underline{33.13} & \textbf{67.58} & \textbf{66.28} \\
\bottomrule
\end{tabular*}
\label{tab:qwen3-main-acc}
\end{table*}

\section{Experiments}
\subsection{Experiment Setup}
\paragraph{Tasks and Datasets} We focus on mathematical reasoning tasks and enhance the model’s capabilities through code tools, employing a sandboxed Jupyter runtime that preserves intermediate states for multi-turn reasoning. Training data consist of Math3-5 from SimpleRL~\cite{zeng2025simplerl} and Deepscaler~\cite{luo3deepscaler}. Detailed dataset statistics are provided in Appendix~\ref{appendix:dataset_details}.

\paragraph{Baselines} We compare our approach with the following baselines: (i) the CoT distillation-based method STaR~\cite{zelikman2022star}; (ii) RL-based methods, including SimpleTIR~\cite{xue2025simpletir}, Dr.GRPO~\cite{liu2025understanding}, and DAPO~\cite{yu2025dapo}; and (iii) the explicit hierarchical method EH-GRPO integrates an external model to control tool execution and uses GRPO for training. Experiments were conducted for all methods under both coupled (C) and decoupled (D) tool invocation settings, except for the hierarchical approaches (EH, IH), which require control over tool execution only in the decoupled tool invocation scenario.

\paragraph{Implementation} We use the Qwen-3 family~\cite{yang2025qwen3} as base models, including Qwen3-1.7B, Qwen3-4B, and Qwen3-8B. To facilitate more efficient reasoning, we employ a ``no thinking'' mode. Training uses a maximum of 5 interaction rounds, a maximum response length of 8K tokens, and a maximum prompt length of 16K tokens. For all RL-based methods, following SimpleTIR, we employ a void-turn\footnote{A void turn refers to a turn in which the model neither invokes any tool nor produces a final answer.} filter along with oversampling, low-advantage filtering, and data difficulty control to enhance training stability and improve performance. Further implementation details, including baseline setups and hyperparameter settings, are provided in Appendix~\ref{appendix:exp_details}.

\paragraph{Evaluation} We evaluate all methods on out-of-domain mathematical reasoning benchmarks, including MATH500~\cite{hendrycks2021measuring}, AIME24, AIME25, AMC23, Hmmt Feb 25, and Olympiad~\cite{he2024olympiadbench}. Following SimpleTIR~\cite{xue2025simpletir}, we set the sampling temperature to 1.0 and use average@8 for evaluation, applying average@32 for smaller datasets (AIME, AMC, and Hmmt) to improve stability. Answer correctness is automatically verified with \texttt{math-verify}.

\begin{figure*}[t]
    \centering
    \begin{minipage}{0.28\textwidth}
        \centering
        \includegraphics[width=\textwidth]{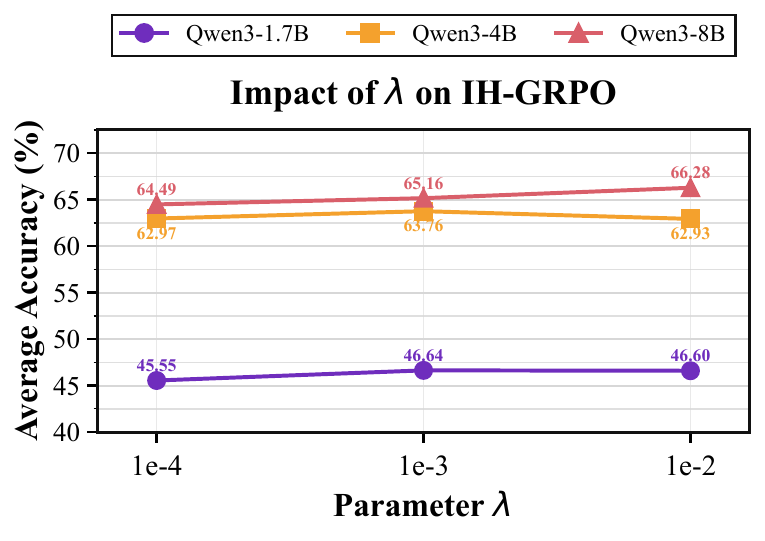}
    \end{minipage}
    \begin{minipage}{0.58\textwidth}
        \centering
        \includegraphics[width=\textwidth]{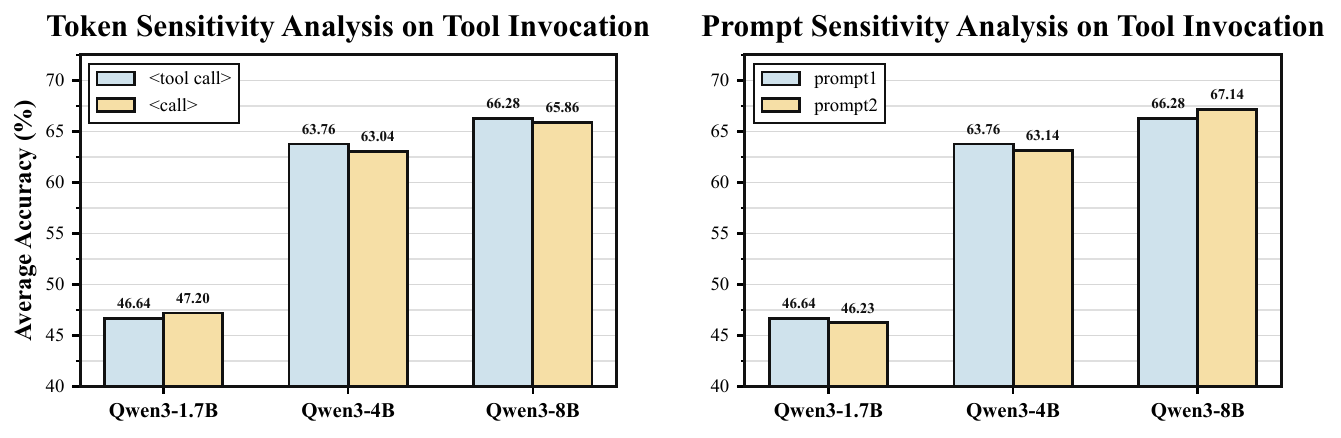}
    \end{minipage}
    \caption{\textbf{(Left)} Performance of IH-GRPO across varying $\lambda$. \textbf{(Middle)} Token sensitivity and \textbf{(Right)} Prompt sensitivity analysis of IH-GRPO on Qwen3 models.}
    \label{fig:combined_analysis}
\end{figure*}

\subsection{Main Results}
The results in Table~\ref{tab:qwen3-main-acc} demonstrate that IH-GRPO outperforms the strongest baseline by 1.87\%, 2.16\%, and 2.53\% on Qwen3-1.7B, Qwen3-4B, and Qwen3-8B, respectively. STaR shows inferior performance, sometimes even underperforming the base model, due to frequent tool-use errors from insufficient interaction with the tool environment. In contrast, RL-based methods consistently perform better, highlighting the importance of training models to interact with tools. EH-GRPO incurs additional overhead from an external controller and relies on insufficient hidden-layer representations, leading to suboptimal performance. IH-GRPO, however, excels in reasoning performance, empirically validating its effectiveness. After TIR training with IH-GRPO, the model shows improvements across various domains, as detailed in Appendix~\ref{sec:ood_evaluation}.

\subsection{Ablation Studies}
\label{sec:ablation_study}
\paragraph{Validation of component design.} For most other models and RL algorithms, the decoupled setting consistently outperforms the coupled setting in most cases, validating the effectiveness of decoupled invocation. IH-GRPO, differing from SimpleTIR(D) only in the loss function, achieves higher accuracy, confirming the benefit of the surrogate loss. These results highlight the contribution of each component in IH-GRPO.

\paragraph{Hyperparameter Analysis of $\lambda$.} Figure~\ref{fig:combined_analysis} (left) shows that IH-GRPO is robust to the
surrogate-loss weight $\lambda$ over $10^{-4}$--$10^{-2}$. Qwen3-1.7B and
Qwen3-4B perform best at $\lambda=10^{-3}$, while Qwen3-8B favors
$\lambda=10^{-2}$, possibly due to its greater robustness in leveraging
stronger tool-execution signals. Overall, all settings remain competitive,
indicating low sensitivity to $\lambda$.

\paragraph{Tool Token Sensitivity.} Figure~\ref{fig:combined_analysis} (middle) shows that IH-GRPO is robust to the choice
of tool-execution tokens. We compare the Qwen3-specific \texttt{<tool\_call>}
token with the generic marker \texttt{<call>}, using the core token
\texttt{call} to compute the surrogate loss for more stable tokenization.
Across all three model scales, performance differences are marginal, with a
slight gain on 1.7B.

\paragraph{Prompt Sensitivity Analysis.} As shown in Figure~\ref{fig:combined_analysis} (right), IH-GRPO remains stable across
the two prompts detailed in Appendix~\ref{appendix:prompt_details}, suggesting
that its gains mainly come from hierarchical execution control and the proposed
objective design rather than prompt engineering.

\begin{figure}[t]
    \centering
    \includegraphics[width=1.0\columnwidth]{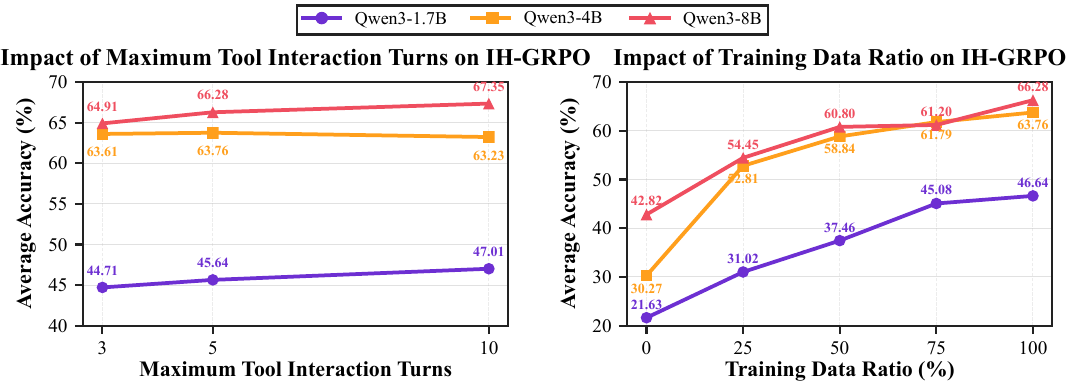}
    \caption{Performance of IH-GRPO under (\textbf{left}) varying tool-interaction budgets  and (\textbf{right}) training-data ratios.}
    \label{fig:more_analysis}
\end{figure}

\paragraph{More Tool Interaction Turns.} As shown in Fig.~\ref{fig:more_analysis} (left), increasing the maximum number of tool
executions from 3 to 10 generally improves performance, as larger budgets
provide more opportunities for tool-assisted reasoning. Notably, all three
models remain stable even with 10 executions and achieve further gains,
indicating that IH-GRPO is robust to larger tool-interaction budgets.

\paragraph{Impact of Varying Training Data Sizes.} As shown in Figure~\ref{fig:more_analysis} (right), IH-GRPO improves steadily as the
training data ratio increases. With limited data, models struggle to interact
effectively with the tool-augmented environment and underperform the base models
in Table~\ref{tab:qwen3-main-acc}. Increasing the data ratio improves tool
invocation and yields the best performance with the full dataset.

\section{Related work}
\subsection{Tool-Integrated Mathematical Reasoning}
Recent work has extensively explored enhancing LLMs' ability to leverage external tools~\cite{jin2025search,song2025r1,bai2025towards, luo2025agent}. In mathematical reasoning, code-based tools significantly enhance models' expressive and computational capacities, improving reasoning performance~\cite{lin2025understanding}. Early approaches~\cite{gou2023tora,huang2025distilling} typically rely on supervised fine-tuning (SFT) or direct preference optimization (DPO)~\cite{rafailov2023direct} to teach tool usage. ReAct~\cite{yao2022react} integrates reasoning with action, while Chain-of-Code~\cite{li2023chain} allows LLMs to execute a combination of pseudocode and executable code. To advance TIR, some studies incorporate RL after SFT~\cite{feng2025retool,shang2025rstar2, luo2025agentmath}, or use RL from scratch~\cite{lin2025awpo, wu2026atlas,lin2025rest}, enabling direct interaction with external tools. SimpleTIR~\cite{xue2025simpletir} mitigates gradient instability by filtering out trajectories with void turns. Despite these efforts, most methods lack awareness of when external feedback is needed and tightly couple tool invocation with execution, potentially undermining reasoning performance. In contrast, we introduce delayed execution with explicit control, leveraging RL to enhance TIR performance using a single model.

\subsection{Hierarchical Learning}
Hierarchical learning is an important aspect of human cognition~\cite{murray2014hierarchy,huntenburg2018large}, separating high-level planning from low-level execution to support more effective problem solving. This principle underlies hierarchical designs in machine learning, including hierarchical reinforcement learning (HRL)~\cite{pateria2021hierarchical, nachum2018data, vezhnevets2017feudal}, which decomposes long-horizon tasks to ease low-level policy learning~\cite{nachum2019does}, and hierarchical structures in LLMs~\cite{wan2025rema, wang2025hierarchical, learningemergent,wu2024beyond} for multi-step reasoning. In tool-integrated reasoning (TIR), hierarchy is often introduced to separate planning from execution~\cite{lu2025octotools,liu2024tool}. For example, ReWOO~\cite{xu2023rewoo} first generates a high-level plan and then executes tools for subtasks in isolated contexts, while CodeSteer~\cite{chen2025codesteer} coordinates text and code reasoning through multi-model collaboration. However, these methods rely on externally imposed hierarchies, such as
prompt-based planning, external models, or hand-crafted workflows, which may
hurt reasoning continuity and scalability. In contrast, IH-GRPO learns
hierarchical control within a single model and implicitly realizes planning and
execution through a theoretically derived surrogate objective, without external
intervention.

\section{Conclusion}
In this work, we identify a key limitation in code-assisted mathematical reasoning: the lack of self-awareness about when external feedback is necessary, which can impact performance. To address this, we propose and formalize the new problem of decoupling tool invocation and execution, introducing delayed execution with explicit control, and conduct a thorough statistical analysis. We further propose a hierarchical control strategy and theoretically derive a surrogate loss that enables implicitly hierarchical policies to emulate the behavior of explicit hierarchical policies, resulting in the IH-GRPO algorithm. Extensive experiments on Qwen3 models ranging from 1.7B to 8B parameters across six out-of-distribution mathematical reasoning benchmarks demonstrate consistent improvements over strong baselines, while also enhancing performance in other domains.

\section*{Limitations}
Our experiments focus on relatively small-scale models with 1.7B to 8B parameters. While these results provide empirical evidence supporting the effectiveness of IH-GRPO under the considered settings, we do not evaluate the method on larger-scale models due to computational constraints. We adopt the Qwen3 model family in the non-thinking mode as the backbone, motivated by its favorable inference efficiency and solid base performance. More comprehensive evaluations on additional model families and different model variants and configurations are left for future work to further investigate the generality and scalability of IH-GRPO across diverse backbone models.

Moreover, our experimental evaluation is primarily confined to mathematical reasoning tasks and environments augmented with code tools, which are widely used benchmarks in prior work. However, the core idea of our method, which decouples tool invocation and execution to improve reasoning coherence and introduces a theoretical framework for implicit hierarchical control, can potentially generalize to broader domains and multi-tool settings. For example, in multi-tool scenarios, decoupling tool invocation from execution allows the output of one tool to be directly reused as the input to another. By controlling when execution occurs, this paradigm reduces unnecessary feedback and context length, while enabling flexible interleaving between textual reasoning and tool-based computation. Importantly, it avoids external model intervention, prompt-based coordination, and manually specified workflows. As a result, this design helps preserve coherent reasoning, offers greater flexibility, and may further improve TIR performance.

\section*{Ethical Statements}
Our experiments rely exclusively on publicly available datasets; these tasks involve reasoning and multi-turn tool execution, without involving any sensitive user data. Our methods focus on improving models' reasoning and tool execution capabilities, and any deployment in open-ended or real-world settings should be carefully monitored to mitigate potential risks.


\bibliography{custom}

@article{xue2025simpletir,
  title={Simpletir: End-to-end reinforcement learning for multi-turn tool-integrated reasoning},
  author={Xue, Zhenghai and Zheng, Longtao and Liu, Qian and Li, Yingru and Zheng, Xiaosen and Ma, Zejun and An, Bo},
  journal={arXiv preprint arXiv:2509.02479},
  year={2025}
}

@article{wu2025templaterl,
  title={TemplateRL: Structured Template-Guided Reinforcement Learning for LLM Reasoning},
  author={Wu, Jinyang and Liao, Chonghua and Feng, Mingkuan and Zhang, Shuai and Wen, Zhengqi and Luo, Haoran and Yang, Ling and Xu, Huazhe and Tao, Jianhua},
  journal={arXiv preprint arXiv:2505.15692},
  year={2025}
}

@article{lin2025understanding,
  title={Understanding tool-integrated reasoning},
  author={Lin, Heng and Xu, Zhongwen},
  journal={arXiv preprint arXiv:2508.19201},
  year={2025}
}

@article{shang2025rstar2,
  title={rstar2-agent: Agentic reasoning technical report},
  author={Shang, Ning and Liu, Yifei and Zhu, Yi and Zhang, Li Lyna and Xu, Weijiang and Guan, Xinyu and Zhang, Buze and Dong, Bingcheng and Zhou, Xudong and Zhang, Bowen and others},
  journal={arXiv preprint arXiv:2508.20722},
  year={2025}
}

@article{feng2025retool,
  title={Retool: Reinforcement learning for strategic tool use in llms},
  author={Feng, Jiazhan and Huang, Shijue and Qu, Xingwei and Zhang, Ge and Qin, Yujia and Zhong, Baoquan and Jiang, Chengquan and Chi, Jinxin and Zhong, Wanjun},
  journal={arXiv preprint arXiv:2504.11536},
  year={2025}
}

@article{li2025torl,
  title={Torl: Scaling tool-integrated rl},
  author={Li, Xuefeng and Zou, Haoyang and Liu, Pengfei},
  journal={arXiv preprint arXiv:2503.23383},
  year={2025}
}

@article{bai2025towards,
  title={Towards Effective Code-Integrated Reasoning},
  author={Bai, Fei and Min, Yingqian and Zhang, Beichen and Chen, Zhipeng and Zhao, Wayne Xin and Fang, Lei and Liu, Zheng and Wang, Zhongyuan and Wen, Ji-Rong},
  journal={arXiv preprint arXiv:2505.24480},
  year={2025}
}

@article{luo2025agent,
  title={Agent lightning: Train any ai agents with reinforcement learning},
  author={Luo, Xufang and Zhang, Yuge and He, Zhiyuan and Wang, Zilong and Zhao, Siyun and Li, Dongsheng and Qiu, Luna K and Yang, Yuqing},
  journal={arXiv preprint arXiv:2508.03680},
  year={2025}
}

@article{jin2025search,
  title={Search-r1: Training llms to reason and leverage search engines with reinforcement learning},
  author={Jin, Bowen and Zeng, Hansi and Yue, Zhenrui and Yoon, Jinsung and Arik, Sercan and Wang, Dong and Zamani, Hamed and Han, Jiawei},
  journal={arXiv preprint arXiv:2503.09516},
  year={2025}
}

@article{song2025r1,
  title={R1-searcher: Incentivizing the search capability in llms via reinforcement learning},
  author={Song, Huatong and Jiang, Jinhao and Min, Yingqian and Chen, Jie and Chen, Zhipeng and Zhao, Wayne Xin and Fang, Lei and Wen, Ji-Rong},
  journal={arXiv preprint arXiv:2503.05592},
  year={2025}
}

@article{dong2025agentic,
  title={Agentic reinforced policy optimization},
  author={Dong, Guanting and Mao, Hangyu and Ma, Kai and Bao, Licheng and Chen, Yifei and Wang, Zhongyuan and Chen, Zhongxia and Du, Jiazhen and Wang, Huiyang and Zhang, Fuzheng and others},
  journal={arXiv preprint arXiv:2507.19849},
  year={2025}
}

@article{zhang2025agent,
  title={Agent-as-Tool: A Study on the Hierarchical Decision Making with Reinforcement Learning},
  author={Zhang, Yanfei},
  journal={arXiv preprint arXiv:2507.01489},
  year={2025}
}

@article{chen2025enhancing,
  title={Enhancing LLM-Based Agents via Global Planning and Hierarchical Execution},
  author={Chen, Junjie and Li, Haitao and Yang, Jingli and Liu, Yiqun and Ai, Qingyao},
  journal={arXiv preprint arXiv:2504.16563},
  year={2025}
}

@article{sheng2024hybridflow,
  title   = {HybridFlow: A Flexible and Efficient RLHF Framework},
  author  = {Guangming Sheng and Chi Zhang and Zilingfeng Ye and Xibin Wu and Wang Zhang and Ru Zhang and Yanghua Peng and Haibin Lin and Chuan Wu},
  year    = {2024},
  journal = {arXiv preprint arXiv: 2409.19256}
}

@article{zeng2025simplerl,
  title={Simplerl-zoo: Investigating and taming zero reinforcement learning for open base models in the wild},
  author={Zeng, Weihao and Huang, Yuzhen and Liu, Qian and Liu, Wei and He, Keqing and Ma, Zejun and He, Junxian},
  journal={arXiv preprint arXiv:2503.18892},
  year={2025}
}

@article{luo3deepscaler,
  title={Deepscaler: Surpassing o1-preview with a 1.5 b model by scaling rl, 2025},
  author={Luo, Michael and Tan, Sijun and Wong, Justin and Shi, Xiaoxiang and Tang, William Y and Roongta, Manan and Cai, Colin and Luo, Jeffrey and Zhang, Tianjun and Li, Li Erran and others},
  journal={Notion Blog},
  volume={3},
  number={4},
  pages={5}
}

@article{yang2025qwen3,
  title={Qwen3 technical report},
  author={Yang, An and Li, Anfeng and Yang, Baosong and Zhang, Beichen and Hui, Binyuan and Zheng, Bo and Yu, Bowen and Gao, Chang and Huang, Chengen and Lv, Chenxu and others},
  journal={arXiv preprint arXiv:2505.09388},
  year={2025}
}

@article{shao2024deepseekmath,
  title={Deepseekmath: Pushing the limits of mathematical reasoning in open language models},
  author={Shao, Zhihong and Wang, Peiyi and Zhu, Qihao and Xu, Runxin and Song, Junxiao and Bi, Xiao and Zhang, Haowei and Zhang, Mingchuan and Li, YK and Wu, Yang and others},
  journal={arXiv preprint arXiv:2402.03300},
  year={2024}
}

@article{pateria2021hierarchical,
  title={Hierarchical reinforcement learning: A comprehensive survey},
  author={Pateria, Shubham and Subagdja, Budhitama and Tan, Ah-hwee and Quek, Chai},
  journal={ACM Computing Surveys (CSUR)},
  volume={54},
  number={5},
  pages={1--35},
  year={2021},
  publisher={ACM New York, NY, USA}
}

@article{nachum2019does,
  title={Why does hierarchy (sometimes) work so well in reinforcement learning?},
  author={Nachum, Ofir and Tang, Haoran and Lu, Xingyu and Gu, Shixiang and Lee, Honglak and Levine, Sergey},
  journal={arXiv preprint arXiv:1909.10618},
  year={2019}
}

@article{nachum2018data,
  title={Data-efficient hierarchical reinforcement learning},
  author={Nachum, Ofir and Gu, Shixiang Shane and Lee, Honglak and Levine, Sergey},
  journal={Advances in neural information processing systems},
  volume={31},
  year={2018}
}

@inproceedings{vezhnevets2017feudal,
  title={Feudal networks for hierarchical reinforcement learning},
  author={Vezhnevets, Alexander Sasha and Osindero, Simon and Schaul, Tom and Heess, Nicolas and Jaderberg, Max and Silver, David and Kavukcuoglu, Koray},
  booktitle={International conference on machine learning},
  pages={3540--3549},
  year={2017},
  organization={PMLR}
}

@article{learningemergent,
  title={Emergent Hierarchical Reasoning in LLMs through Reinforcement Learning},
  author={LEARNING, THROUGH REINFORCEMENT}
}

@article{wang2025hierarchical,
  title={Hierarchical Reasoning Model},
  author={Wang, Guan and Li, Jin and Sun, Yuhao and Chen, Xing and Liu, Changling and Wu, Yue and Lu, Meng and Song, Sen and Yadkori, Yasin Abbasi},
  journal={arXiv preprint arXiv:2506.21734},
  year={2025}
}

@article{wan2025rema,
  title={Rema: Learning to meta-think for llms with multi-agent reinforcement learning},
  author={Wan, Ziyu and Li, Yunxiang and Wen, Xiaoyu and Song, Yan and Wang, Hanjing and Yang, Linyi and Schmidt, Mark and Wang, Jun and Zhang, Weinan and Hu, Shuyue and others},
  journal={arXiv preprint arXiv:2503.09501},
  year={2025}
}

@article{lu2025octotools,
  title={Octotools: An agentic framework with extensible tools for complex reasoning},
  author={Lu, Pan and Chen, Bowen and Liu, Sheng and Thapa, Rahul and Boen, Joseph and Zou, James},
  journal={arXiv preprint arXiv:2502.11271},
  year={2025}
}

@article{liu2024tool,
  title={Tool-planner: Dynamic solution tree planning for large language model with tool clustering},
  author={Liu, Yanming and Peng, Xinyue and Zhang, Yuwei and Cao, Jiannan and Zhang, Xuhong and Cheng, Sheng and Wang, Xun and Yin, Jianwei and Du, Tianyu},
  journal={arXiv e-prints},
  pages={arXiv--2406},
  year={2024}
}

@article{zelikman2022star,
  title={Star: Bootstrapping reasoning with reasoning},
  author={Zelikman, Eric and Wu, Yuhuai and Mu, Jesse and Goodman, Noah},
  journal={Advances in Neural Information Processing Systems},
  volume={35},
  pages={15476--15488},
  year={2022}
}

@article{yu2025dapo,
  title={Dapo: An open-source llm reinforcement learning system at scale},
  author={Yu, Qiying and Zhang, Zheng and Zhu, Ruofei and Yuan, Yufeng and Zuo, Xiaochen and Yue, Yu and Dai, Weinan and Fan, Tiantian and Liu, Gaohong and Liu, Lingjun and others},
  journal={arXiv preprint arXiv:2503.14476},
  year={2025}
}

@inproceedings{kwon2023efficient,
  title={Efficient memory management for large language model serving with pagedattention},
  author={Kwon, Woosuk and Li, Zhuohan and Zhuang, Siyuan and Sheng, Ying and Zheng, Lianmin and Yu, Cody Hao and Gonzalez, Joseph and Zhang, Hao and Stoica, Ion},
  booktitle={Proceedings of the 29th symposium on operating systems principles},
  pages={611--626},
  year={2023}
}

@article{wang2024mmlu,
  title={Mmlu-pro: A more robust and challenging multi-task language understanding benchmark},
  author={Wang, Yubo and Ma, Xueguang and Zhang, Ge and Ni, Yuansheng and Chandra, Abhranil and Guo, Shiguang and Ren, Weiming and Arulraj, Aaran and He, Xuan and Jiang, Ziyan and others},
  journal={Advances in Neural Information Processing Systems},
  volume={37},
  pages={95266--95290},
  year={2024}
}

@article{liu2020logiqa,
  title={Logiqa: A challenge dataset for machine reading comprehension with logical reasoning},
  author={Liu, Jian and Cui, Leyang and Liu, Hanmeng and Huang, Dandan and Wang, Yile and Zhang, Yue},
  journal={arXiv preprint arXiv:2007.08124},
  year={2020}
}

@article{suzgun2022challenging,
  title={Challenging BIG-Bench Tasks and Whether Chain-of-Thought Can Solve Them},
  author={Suzgun, Mirac and Scales, Nathan and Sch{\"a}rli, Nathanael and Gehrmann, Sebastian and Tay, Yi and Chung, Hyung Won and Chowdhery, Aakanksha and Le, Quoc V and Chi, Ed H and Zhou, Denny and and Wei, Jason},
  journal={arXiv preprint arXiv:2210.09261},
  year={2022}
}

@article{hendrycks2021measuring,
  title={Measuring mathematical problem solving with the math dataset},
  author={Hendrycks, Dan and Burns, Collin and Kadavath, Saurav and Arora, Akul and Basart, Steven and Tang, Eric and Song, Dawn and Steinhardt, Jacob},
  journal={arXiv preprint arXiv:2103.03874},
  year={2021}
}

@inproceedings{he2024olympiadbench,
  title={Olympiadbench: A challenging benchmark for promoting agi with olympiad-level bilingual multimodal scientific problems},
  author={He, Chaoqun and Luo, Renjie and Bai, Yuzhuo and Hu, Shengding and Thai, Zhen and Shen, Junhao and Hu, Jinyi and Han, Xu and Huang, Yujie and Zhang, Yuxiang and others},
  booktitle={Proceedings of the 62nd Annual Meeting of the Association for Computational Linguistics (Volume 1: Long Papers)},
  pages={3828--3850},
  year={2024}
}

@article{li2025teaching,
  title={Teaching Language Models to Reason with Tools},
  author={Li, Chengpeng and Tang, Zhengyang and Li, Ziniu and Xue, Mingfeng and Bao, Keqin and Ding, Tian and Sun, Ruoyu and Wang, Benyou and Wang, Xiang and Lin, Junyang and others},
  journal={arXiv preprint arXiv:2510.20342},
  year={2025}
}

@article{gou2023tora,
  title={Tora: A tool-integrated reasoning agent for mathematical problem solving},
  author={Gou, Zhibin and Shao, Zhihong and Gong, Yeyun and Shen, Yelong and Yang, Yujiu and Huang, Minlie and Duan, Nan and Chen, Weizhu},
  journal={arXiv preprint arXiv:2309.17452},
  year={2023}
}

@article{huang2025distilling,
  title={Distilling Tool Knowledge into Language Models via Back-Translated Traces},
  author={Huang, Xingyue and Hu, Xianglong and Ding, Zifeng and He, Yuan and Alzarooni, Waleed and Ye, Ziyu and Fan, Wendong and He, Bailan and Bo, Haige and Hu, Changran and others},
  journal={arXiv preprint arXiv:2506.19171},
  year={2025}
}

@article{luo2025agentmath,
  title={AgentMath: Empowering Mathematical Reasoning for Large Language Models via Tool-Augmented Agent},
  author={Luo, Haipeng and Feng, Huawen and Sun, Qingfeng and Xu, Can and Zheng, Kai and Wang, Yufei and Yang, Tao and Hu, Han and Tang, Yansong and Wang, Di},
  journal={arXiv preprint arXiv:2512.20745},
  year={2025}
}

@article{liu2025understanding,
  title={Understanding r1-zero-like training: A critical perspective},
  author={Liu, Zichen and Chen, Changyu and Li, Wenjun and Qi, Penghui and Pang, Tianyu and Du, Chao and Lee, Wee Sun and Lin, Min},
  journal={arXiv preprint arXiv:2503.20783},
  year={2025}
}

@article{murray2014hierarchy,
  title={A hierarchy of intrinsic timescales across primate cortex},
  author={Murray, John D and Bernacchia, Alberto and Freedman, David J and Romo, Ranulfo and Wallis, Jonathan D and Cai, Xinying and Padoa-Schioppa, Camillo and Pasternak, Tatiana and Seo, Hyojung and Lee, Daeyeol and others},
  journal={Nature neuroscience},
  volume={17},
  number={12},
  pages={1661--1663},
  year={2014},
  publisher={Nature Publishing Group US New York}
}

@article{huntenburg2018large,
  title={Large-scale gradients in human cortical organization},
  author={Huntenburg, Julia M and Bazin, Pierre-Louis and Margulies, Daniel S},
  journal={Trends in cognitive sciences},
  volume={22},
  number={1},
  pages={21--31},
  year={2018},
  publisher={Elsevier}
}

@article{wu2024beyond,
  title={Beyond examples: High-level automated reasoning paradigm in in-context learning via mcts},
  author={Wu, Jinyang and Feng, Mingkuan and Zhang, Shuai and Che, Feihu and Wen, Zengqi and Liao, Chonghua and Tao, Jianhua},
  journal={arXiv preprint arXiv:2411.18478},
  year={2024}
}

@article{wu2026atlas,
  title={Atlas: Orchestrating Heterogeneous Models and Tools for Multi-Domain Complex Reasoning},
  author={Wu, Jinyang and Zhai, Guocheng and Jin, Ruihan and Yuan, Jiahao and Shen, Yuhao and Zhang, Shuai and Wen, Zhengqi and Tao, Jianhua},
  journal={arXiv preprint arXiv:2601.03872},
  year={2026}
}

@article{xu2023rewoo,
  title={Rewoo: Decoupling reasoning from observations for efficient augmented language models},
  author={Xu, Binfeng and Peng, Zhiyuan and Lei, Bowen and Mukherjee, Subhabrata and Liu, Yuchen and Xu, Dongkuan},
  journal={arXiv preprint arXiv:2305.18323},
  year={2023}
}

@article{chen2025codesteer,
  title={CodeSteer: Symbolic-Augmented Language Models via Code/Text Guidance},
  author={Chen, Yongchao and Hao, Yilun and Liu, Yueying and Zhang, Yang and Fan, Chuchu},
  journal={arXiv preprint arXiv:2502.04350},
  year={2025}
}

@article{li2023chain,
  title={Chain of code: Reasoning with a language model-augmented code emulator},
  author={Li, Chengshu and Liang, Jacky and Zeng, Andy and Chen, Xinyun and Hausman, Karol and Sadigh, Dorsa and Levine, Sergey and Fei-Fei, Li and Xia, Fei and Ichter, Brian},
  journal={arXiv preprint arXiv:2312.04474},
  year={2023}
}

@inproceedings{yao2022react,
  title={React: Synergizing reasoning and acting in language models},
  author={Yao, Shunyu and Zhao, Jeffrey and Yu, Dian and Du, Nan and Shafran, Izhak and Narasimhan, Karthik R and Cao, Yuan},
  booktitle={The eleventh international conference on learning representations},
  year={2022}
}

@article{rafailov2023direct,
  title={Direct preference optimization: Your language model is secretly a reward model},
  author={Rafailov, Rafael and Sharma, Archit and Mitchell, Eric and Manning, Christopher D and Ermon, Stefano and Finn, Chelsea},
  journal={Advances in neural information processing systems},
  volume={36},
  pages={53728--53741},
  year={2023}
}

@article{lu2026contextual,
  title={Contextual Rollout Bandits for Reinforcement Learning with Verifiable Rewards},
  author={Lu, Xiaodong and Wang, Xiaohan and Chai, Jiajun and Yin, Guojun and Lin, Wei and Chen, Zhijun and Luo, Yu and Zhuang, Fuzhen and Ban, Yikun and Wang, Deqing},
  journal={arXiv preprint arXiv:2602.08499},
  year={2026}
}

@article{lin2026resrl,
  title={ResRL: Boosting LLM Reasoning via Negative Sample Projection Residual Reinforcement Learning},
  author={Lin, Zihan and Wang, Xiaohan and Cao, Jie and Chai, Jiajun and Wang, Li and Lu, Xiaodong and Lin, Wei and He, Ran and Yin, Guojun},
  journal={arXiv preprint arXiv:2605.00380},
  year={2026}
}

@article{yang2026your,
  title={Your Group-Relative Advantage Is Biased},
  author={Yang, Fengkai and Chen, Zherui and Wang, Xiaohan and Lu, Xiaodong and Chai, Jiajun and Yin, Guojun and Lin, Wei and Ma, Shuai and Zhuang, Fuzhen and Wang, Deqing and others},
  journal={arXiv preprint arXiv:2601.08521},
  year={2026}
}

@article{lin2025awpo,
  title={AWPO: Enhancing Tool-Use of Large Language Models through Adaptive Integration of Reasoning Rewards},
  author={Lin, Zihan and Wang, Xiaohan and Yang, Hexiong and Chai, Jiajun and Cao, Jie and Yin, Guojun and Lin, Wei and He, Ran},
  journal={arXiv preprint arXiv:2512.19126},
  year={2025}
}

@article{lin2025rest,
  title={ResT: Reshaping Token-Level Policy Gradients for Tool-Use Large Language Models},
  author={Lin, Zihan and Wang, Xiaohan and Cao, Jie and Chai, Jiajun and Yin, Guojun and Lin, Wei and He, Ran},
  journal={arXiv preprint arXiv:2509.21826},
  year={2025}
}

\appendix
\renewcommand{\thetable}{A\arabic{table}} 
\renewcommand{\thefigure}{A\arabic{figure}} 
\renewcommand{\thetheorem}{A\arabic{theorem}} 
\setcounter{table}{0} 
\setcounter{figure}{0} 
\setcounter{theorem}{0}

\onecolumn
\section{Implicit Hierarchical Derivation}
\label{appendix:imp_hie_proof}
\begin{theorem}[Equivalence of One-Step Updates Between Explicit and Implicit Hierarchical Policies]
There exists an explicit hierarchical policy $\pi_E(\theta_i \mid S)$ that is equivalent to the initial implicit hierarchical policy $\pi_I(\beta_i \mid S)$, i.e.,
\[
\forall i, \quad \pi_E(\theta_i \mid S) = \pi_I(\beta_i \mid S).
\]
Then, under a single policy-gradient update with learning rate $\eta$, 
the equivalence between $\pi_E(\theta_i \mid S)$ and $\pi_I(\beta_i \mid S)$ can be preserved by optimizing the following 
surrogate loss $L_I'(\beta_i)$ for the implicit policy:
\begin{align}
& L_I'(\beta_i) = - A \left[ \beta_i - \log \left( \sum_{s=0}^V e^{\beta_s} \right) \right] - A \cdot \text{sg}(\gamma_i) \cdot \log Z_i + \left( A \log Z_i - f_i \cdot \beta_0 \right) \cdot \mathbb{I}\{i \geq 1\}, \nonumber \\
& Z_i = \sum_{s=1}^V e^{\beta_s}, \gamma_i = \frac{Z_i}{e^{\beta_0} + Z_i}, f_i = \frac{1}{\eta} \ln \left( \text{sg}(\frac{Z'_i}{Z_i}) \right), Z'_i = \sum_{s=1}^V \exp\left( \beta_s + \eta A \left( \delta_{s i} - \text{softmax}_{1-V}(\beta_s) \right) \right),
\end{align}
where $A$ is the advantage and $\text{sg}(\cdot)$ denotes the stop-gradient operation.
\end{theorem}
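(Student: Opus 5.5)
We work at a single context $S$ and a single sampled token $i$ with advantage $A$. The approach is to compare one exact policy-gradient step on the explicit parameters with one step on the implicit logits. We then choose the implicit loss so that the two updated distributions coincide.

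\textbf{Step 1: initial matching.} Given $\beta$, we set
\[
\theta_i=\beta_i \quad (i\ge 1), \qquad \sigma(\theta_0)=\frac{Z}{e^{\beta_0}+Z}=\gamma, \qquad Z=\sum_{s=1}^V e^{\beta_s}.
\]
Then $\pi_E(\theta_0\mid S)=1-\gamma=\pi_I(\beta_0\mid S)$ and $\pi_E(\theta_i\mid S)=\gamma\, e^{\beta_i}/Z=\pi_I(\beta_i\mid S)$. This gives the existence claim.

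\textbf{Step 2: explicit update.} The loss is $L_E=-A\log\pi_E(\theta_i\mid S)$, and we split on the sampled token.
\begin{itemize}
\item If $i=0$ (execute), only $\theta_0$ moves: $\theta_0'=\theta_0-\eta A\,\sigma(\theta_0)$. The low-level logits are unchanged.
\item If $i\ge 1$, the high-level step is $\theta_0'=\theta_0+\eta A(1-\sigma(\theta_0))$. The low-level logits move as $\theta_s'=\theta_s+\eta A(\delta_{si}-\mathrm{softmax}_{1-V}(\theta_s))$.
\end{itemize}
The updated explicit policy is therefore $\pi_E'(0)=1-\sigma(\theta_0')$ and $\pi_E'(s)=\sigma(\theta_0')\,e^{\theta_s'}/Z'$, where $Z'=\sum_{s\ge 1}e^{\theta_s'}$. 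This is precisely the $Z_i'$ in the statement.

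\textbf{Step 3: implicit update and matching.} For the implicit side it is convenient to reparametrize by the pair $(\log Z,\ \beta_0)$ together with the within-block logits $\beta_s$. Then $\pi_I$ factorizes as $\gamma(\beta_0,\log Z)\cdot\mathrm{softmax}_{1-V}(\beta_s)$, with $\gamma=\sigma(\log Z-\beta_0)$. Thus $\log Z-\beta_0$ plays the role of $\theta_0$.

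The ordinary term $-A\log\pi_I(\beta_i\mid S)$ already yields the correct within-block softmax update. However, its gradient on the high-level coordinate is tangled with the log-partition of the block. The correction terms are designed to fix this.
\begin{itemize}
\item The term $-A\,\mathrm{sg}(\gamma_i)\log Z_i$ (for $i=0$), combined with $+A\log Z_i$ (for $i\ge1$), adjusts the effective gradient on $\log Z-\beta_0$. We will compute $\partial/\partial\beta_0$ and $\partial/\partial\beta_s$ of each term, using $\partial \log Z/\partial\beta_s=\mathrm{softmax}_{1-V}(\beta_s)$. The goal is to check that the change in $\log Z-\beta_0$ equals $\eta A(1-\gamma)$ or $-\eta A\gamma$ in the respective cases, matching Step 2.
\item After the step, the block's $\log Z$ has also shifted by $\log(Z'/Z)$ purely from the within-block update. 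This would spoil the high-level match. The term $-f_i\beta_0$ with $f_i=\frac{1}{\eta}\ln\mathrm{sg}(Z_i'/Z_i)$ moves $\beta_0$ by exactly $\eta f_i=\ln(Z'/Z)$, which cancels the shift.
\end{itemize}
Finally we verify $1-\sigma(\theta_0')=e^{\beta_0'}/(e^{\beta_0'}+Z'')$ and the per-token equality.

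\textbf{Main obstacle.} The hard part is the bookkeeping in Step 3. The $\log Z$ correction terms feed gradient back into every $\beta_s$, not just into the high-level coordinate. We must show that the within-block softmax is still updated exactly as in Step 2, presumably because the softmax is invariant to uniform shifts of all $\beta_s$ (so the extra gradients only alter $\log Z$). We must also confirm that the resulting change in $\log Z$ is what $f_i$ compensates.

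If exact equality fails at first order because of the nonlinearity of $\log Z'$, the first thing to try is to define $Z'$ as the post-update partition function. This makes the matching exact by construction for a single step.
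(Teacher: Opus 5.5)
\textbf{There is a genuine gap in Step 3.} Your skeleton is the paper's: the matching $\theta_s=\beta_s$ for $s\ge 1$ with $\theta_0=\ln Z-\beta_0$, the explicit gradients, the split into $i=0$ and $i\ge 1$, and the role of $-f_i\beta_0$ (shifting $\beta_0$ by $\ln(Z_i'/Z_i)$) all match the appendix. The gap is that you misidentify what the $\log Z$ corrections do. You assert that the plain term $-A\log\pi_I(\beta_i\mid S)$ already gives the correct within-block update, and that the extra $\beta_s$-gradients from the corrections are harmless because softmax is invariant under uniform shifts. Both claims are false.

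For $j\ge 1$ the plain term has gradient $-A\bigl(\delta_{ji}-\gamma\,\mathrm{softmax}_{1-V}(\beta_j)\bigr)$. The explicit step requires $0$ when $i=0$, and $-A\bigl(\delta_{ji}-\mathrm{softmax}_{1-V}(\beta_j)\bigr)$ when $i\ge 1$. The discrepancies are $-A\gamma\,\mathrm{softmax}_{1-V}(\beta_j)$ and $A(1-\gamma)\,\mathrm{softmax}_{1-V}(\beta_j)$. They are proportional to the softmax, not constant in $j$, so the plain update really does distort the within-block distribution. For instance, when $i=0$ the explicit low-level logits do not move at all, but the plain implicit ones move non-uniformly. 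Now use $\partial\log Z/\partial\beta_j=\mathrm{softmax}_{1-V}(\beta_j)$. The correction $-A\,\mathrm{sg}(\gamma)\log Z$ (case $i=0$) has gradient exactly $-A\gamma\,\mathrm{softmax}_{1-V}(\beta_j)$. The net correction $A(1-\mathrm{sg}(\gamma))\log Z$ (case $i\ge 1$, since the $\mathrm{sg}(\gamma)$ term is present in both cases) has gradient exactly $A(1-\gamma)\,\mathrm{softmax}_{1-V}(\beta_j)$. These are precisely the discrepancies, so after the step $\beta_j'=\theta_j'$ exactly.

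\textbf{The roles are the reverse of your description.} Because $\gamma$ is stop-gradiented and $Z$ does not involve $\beta_0$, the corrections contribute nothing to $\partial/\partial\beta_0$; they only repair the low-level logits. The high-level step is already supplied by the plain term's $\beta_0$-gradient, which is $-A\gamma$ for $i=0$ and $A(1-\gamma)$ for $i\ge 1$. The only residual is the drift of $\ln Z$ when $i\ge 1$, which $f_i$ removes. For $i=0$, $Z$ is unchanged and $\ln Z-\beta_0'=\theta_0-\eta A\gamma=\theta_0'$. For $i\ge 1$, $\ln Z_i'-\beta_0'=\ln Z_i-\beta_0+\eta A(1-\gamma)=\theta_0'$.

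Exactness, not equality up to a uniform shift, is essential here. $f_i$ is built from the explicit $Z_i'$. If the implicit block ended up shifted by a constant $c$, its partition function would be $e^{c}Z_i'$, and the high-level identity would be off by $c$. Your uniform-shift mechanism therefore could not close the proof even if the extra gradients were uniform. Once you do this direct gradient computation, the verification takes a few lines, as in the paper. Your fallback of redefining $Z'$ is unnecessary, since the statement's $Z_i'$ is already the exact post-update partition function.
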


\begin{proof}
We define the explicit hierarchical strategy as a set of parameters \(\{\theta_0, \theta_1, \dots, \theta_V\}\), where the high-level actions are denoted by \(C\) (continue reasoning) and \(E\) (execution). The probability of \(\theta_i\) conditioned on the state \(S\) is given by:
\[
P\left(\theta_i \mid S\right) = 
\begin{cases} 
\sigma\left(\theta_0\right) \cdot \text{sofmax}_{1-V}\left(\theta_i\right), & \text{if } C, i \geq 1, \\
1 - \sigma\left(\theta_0\right), & \text{if } E.
\end{cases}
\]
Next, we define the implicit hierarchical strategy as a set of parameters \(\left\{\beta_0, \beta_1, \cdots, \beta_V\right\}\). The probability of \(\beta_i\) conditioned on the state \(S\) is:

\[
p\left(\beta_i \mid S\right) = \frac{e^{\beta_i}}{\sum_{s=0}^V e^{\beta_s}}.
\]

\subsection{Step 1: Initial Condition}
We assume:
\[
P(\theta_i | S) = P(\beta_i | S),
\]
\[
\Rightarrow 1 - \sigma(\theta_0) = \frac{e^{\beta_0}}{\sum_{s=0}^V e^{\beta_s}} 
\quad \implies \quad \sigma(\theta_0) = \frac{\sum_{s=1}^V e^{\beta_s}}{\sum_{s=0}^V e^{\beta_s}}.
\]
Substituting $\sigma(\theta_0)$ into $P(\theta_i \mid S)$ for case $E$, we obtain:
\[
\frac{\sum_{s=1}^V e^{\beta_s}}{\sum_{s=0}^V e^{\beta_s}} \cdot \frac{e^{\theta_i}}{\sum_{s=1}^V e^{\theta_s}} 
= \frac{e^{\beta_i}}{\sum_{s=0}^V e^{\beta_s}}
\Rightarrow \quad 
\frac{e^{\beta_i}}{\sum_{s=1}^V e^{\beta_s}} = \frac{e^{\theta_i}}{\sum_{s=1}^V e^{\theta_s}}.
\]
We derive the equivalent conditions of $P(\theta_i | S) = P(\beta_i | S)$:
\[
1.\ \sigma(\theta_0) = \frac{\sum_{s=1}^V e^{\beta_s}}{\sum_{s=0}^V e^{\beta_s}} = \gamma \quad \text{and} \quad 2.\ \frac{e^{\beta_i}}{\sum_{s=1}^V e^{\beta_s}} = \frac{e^{\theta_i}}{\sum_{s=1}^V e^{\theta_i}}.
\]
Without loss of generality, assume \( \theta_i = \beta_i \) for \( i \geq 1 \) from condition \(\boldsymbol{2}\). Besides, we have: \( \theta_0 = \ln\sum_{s=1}^V e^{\beta_s} - \beta_0 \). Therefore, \( \{\beta_0, \beta_1, \dots, \beta_V\} \) can equivalently represent \( \{\theta_0, \theta_1, \dots, \theta_V\} \) from the initial condition.

\subsection{Step 2: Explicit Hierarchical Policy Update}
We assume use policy gradient loss: 
\begin{align}
L_{E}(\theta_i) = -A \cdot \log(P(\theta_i|S)).
\end{align}
When updating token $ i \neq 0 $: \\
\[
\frac{\partial L_{E}(\theta_i)}{\partial \theta_j} = -\frac{\partial A \cdot \left[ \log(\sigma(\theta_0)) + \theta_i - \log\sum_{s=1}^V e^{\theta_s} \right]}{\partial \theta_j}
=
\begin{cases} 
-A \cdot \left[ 1 - \text{softmax}_{1-V}(\theta_j) \right], & j = i \text{ and } j \geq 1 \\
-A \cdot \left[ -\text{softmax}_{1-V}(\theta_j) \right], & j \neq i \text{ and } j \geq 1, 
\end{cases}
\]
\[
\frac{\partial L_{E}(\theta_i)}{\partial \theta_0} = -A(1 - \sigma(\theta_0)),
\]
\[
\frac{\partial L_{E}(\theta_i)}{\partial \theta_j} =
\begin{cases} 
- A(1 - \sigma(\theta_0)), & j = 0 \\
A \cdot \text{softmax}_{1-V}(\theta_j), & j \neq i \text{ and } j \geq 1 \\
A \cdot \text{softmax}_{1-V}(\theta_j) - A, & j = i \text{ and } j \geq 1.
\end{cases}
\]
When updating token $ i = 0 $: \\
\[
\frac{\partial L_{E}(\theta_i)}{\partial \theta_j} = - \frac{\partial A \cdot \log\left[ 1 - \sigma(\theta_0) \right]}{\partial \theta_j} =
\begin{cases} 
0, & j \neq 0 \\
A \sigma(\theta_0), & j = 0. 
\end{cases}
\]

\subsection{Step 3: Implicit Hierarchical Policy Update}
We assume use policy gradient loss: 
\begin{align}
L_{I}(\beta_i) = - A \cdot \log(P(\beta_i|S)),
\end{align}
\[
\frac{\partial L_{I}(\beta_i)}{\partial \beta_j} = - \frac{\partial A \cdot \log\frac{e^{\beta_i}}{\sum_{s=0}^V e^{\beta_s}}}{\partial \beta_j} = - \frac{\partial A \cdot \left[ \beta_i - \log\sum_{s=0}^V e^{\beta_s} \right]}{\partial \beta_j}
=
\begin{cases} 
A \cdot \text{softmax}_{0-V}(\beta_j), & j \neq i \\
A \cdot \text{softmax}_{0-V}(\beta_j) - A, & j = i 
\end{cases}
\]
\[
=
\begin{cases} 
A \cdot \text{softmax}_{1-V}(\beta_j) \cdot \sigma(\theta_0), & j \neq i \\
A \cdot \text{softmax}_{1-V}(\beta_j) \cdot \sigma(\theta_0) - A, & j = i. 
\end{cases}
\]

\subsection{Step 4: Surrogate Loss of Implicit Hierarchical Policy}
\textbf{Case 1: Sampled Token is Tool Execution (\(i=0\))}

The surrogate loss is:
\[
L_I'(\beta_i) = - A \log \left( \frac{e^{\beta_0}}{e^{\beta_0} + Z_i} \right) - A \cdot \text{sg}(\gamma_i) \cdot \log Z_i,
\]
where \(Z_i = \sum_{s=1}^V e^{\beta_s}\) and \(\gamma_i = \frac{Z_i}{e^{\beta_0} + Z_i} = \sigma(\theta_0)\).
Gradients:

\noindent The base term is \(-A (\beta_0 - \log(e^{\beta_0} + Z_i))\), yielding:
\[
\frac{\partial}{\partial \beta_0} \left[- A (\beta_0 - \log(e^{\beta_0} + Z_i)) \right] = - A \left( 1 - \frac{e^{\beta_0}}{e^{\beta_0} + Z_i} \right) = - A \gamma_i,
\]
\[
\frac{\partial}{\partial \beta_j} \left[ - A (\beta_0 - \log(e^{\beta_0} + Z_i)) \right] =  A \frac{e^{\beta_j}}{e^{\beta_0} + Z_i} = A \cdot \gamma_i \cdot \text{softmax}_{1-V}(\beta_j), \quad j \geq 1.
\]
The correction term \( - A \cdot \text{sg}(\gamma_i) \cdot \log Z_i\) contributes:
\[
\frac{\partial}{\partial \beta_0} ( - A \text{sg}(\gamma_i) \cdot \log Z_i) = 0,
\]
\[
\frac{\partial}{\partial \beta_j} ( - A \cdot \text{sg}(\gamma_i) \cdot \log Z_i) =  - A \cdot \text{sg}(\gamma_i) \cdot \frac{e^{\beta_j}}{Z_i} =  - A \cdot \text{sg}(\gamma_i) \cdot \text{softmax}_{1-V}(\beta_j), \quad j \geq 1.
\]
Overall:
\[
\frac{\partial L_I'(\beta_i)}{\partial \beta_0} = - A \gamma_i = - A \sigma(\theta_0), \quad \frac{\partial L_I'(\beta_i)}{\partial \beta_j} = 0 \quad (j \geq 1).
\]
So we ensure identical updates:
\[
\beta_0' = \beta_0 + \eta \cdot A \sigma(\theta_0), \quad \beta_j' = \beta_j, \quad j \geq 1.
\]
where $\eta$ is learning rate, and the explicit updates remain:
\[
\theta_0' = \theta_0 - \eta A \sigma(\theta_0), \quad \theta_j' = \theta_j, \quad j \geq 1,
\]
Verification of conditions:

\noindent Condition 2 holds exactly since \(\beta_j' = \beta_j = \theta_j = \theta_j'\) for \(j \geq 1\). For condition 1, let \(Z'_i = \sum_{s=1}^V e^{\beta_s'} = Z_i\) (unchanged). Then
  \[
  \gamma' = \frac{Z'_i}{e^{\beta_0'} + Z'_i} = \frac{Z_i}{e^{\beta_0 + \eta A \sigma(\theta_0)} + Z_i}.
  \]
From the initial relation \(\theta_0 = \ln Z_i - \beta_0\), we have
 \begin{align*}
    \theta_0' &= \theta_0 - \eta A \sigma(\theta_0) = \ln Z_i - \beta_0 - \eta A \sigma(\theta_0) = \ln Z_i - (\beta_0 + \eta A \sigma(\theta_0)) = \ln Z_i - \beta_0'.
\end{align*}
Thus, \(e^{\theta_0'} = Z_i e^{-\beta_0'}\), \(e^{-\theta_0'} = e^{\beta_0'}/Z_i\), and
\[
\sigma(\theta_0') = \frac{1}{1 + e^{-\theta_0'}} = \frac{Z_i}{Z_i + e^{\beta_0'}} = \gamma_i',
\]
so condition 1 holds exactly.

\textbf{Case 2: Sampled Token is Non-Tool ($i \geq 1$)}

The surrogate loss is:
\[
L_I'(\beta_i) = - A \left[ \beta_i - \log \left( \sum_{s=0}^V e^{\beta_s} \right) \right] + A (1 - \text{sg}(\gamma_i)) \cdot \log Z_i - f_i \cdot \beta_0,
\]
where $\eta$ is learning rate, $f_i = \frac{1}{\eta} \ln \left( \text{sg}(\frac{Z'_i}{Z_i}) \right)$, $Z_i = \sum_{s=1}^V e^{\beta_s}$, $Z'_i = \sum_{s=1}^V \exp\left( \beta_s + \eta A \left( \delta_{s i} - \text{softmax}_{1-V}(\beta_s) \right) \right)$, $\gamma_i = \frac{Z_i}{e^{\beta_0} + Z_i}$, and $\delta_{s i}$ denotes the Kronecker delta function, which is defined as:
\[
\delta_{s i} = 
\begin{cases} 
1 & \text{if } s = i, \\
0 & \text{otherwise}.
\end{cases}
\]
Gradients:

\noindent Let $N = \sum_{s=0}^V e^{\beta_s}$. The base term is $-A (\beta_i - \log N)$, yielding:
\[
\frac{\partial}{\partial \beta_j} \left[ - A (\beta_i - \log N) \right] = - A \left( \delta_{j i} - \text{softmax}_{0-V}(\beta_j) \right).
\]
The correction term $A (1 - \text{sg}(\gamma_i)) \cdot \log Z_i$ contributes $0$ for $j=0$ and:
\[
\frac{\partial}{\partial \beta_j} \left[ A (1 - \text{sg}(\gamma_i)) \cdot \log Z_i \right] = A (1 - \text{sg}(\gamma_i)) \cdot \text{softmax}_{1-V}(\beta_j), \quad j \geq 1.
\]
Overall:
\[
\frac{\partial L_I'(\beta_i)}{\partial \beta_j} = -A \left[ \delta_{j i} - \gamma_i \cdot \text{softmax}_{1-V}(\beta_j) \right] + A (1 - \gamma_i) \cdot \text{softmax}_{1-V}(\beta_j) = - A \left[ \delta_{j i} - \text{softmax}_{1-V}(\beta_j) \right], \quad j \geq 1,
\]
\[
\frac{\partial L_I'(\beta_i)}{\partial \beta_0} = A (1 - \gamma_i) - f_i.
\]
Updated Parameters:

\noindent Assuming gradient descent updates $\beta_i \leftarrow \beta_i - \eta \nabla L_I'(\beta_i)$:
\[
\beta_j' = \beta_j + \eta A \left( \delta_{j i} - \text{softmax}_{1-V}(\beta_j) \right) \quad (j \geq 1),
\]
\[
\beta_0' = \beta_0 - \eta \left( A (1 - \gamma_i) - f_i \right) = \beta_0 - \eta A (1 - \gamma_i) + \eta f_i.
\]
Substituting $f_i$:
\[
\beta_0' = \beta_0 - \eta A (1 - \gamma_i) + \ln \left( \frac{Z'_i}{Z_i} \right).
\]
From the explicit loss $L_E(\theta_i)$, the updates are
\[
\theta_j' = \theta_j + \eta A \left( \delta_{j i} - \text{softmax}_{1-V}(\theta_j) \right) \quad (j \geq 1),
\]
\[
\theta_0' = \theta_0 + \eta A (1 - \gamma_i).
\]
Verification of Conditions:

\noindent Condition 2 holds exactly because $\beta_j = \theta_j$ initially for $j \geq 1$ and the gradients and updates for $j \geq 1$ are identical in both policies, so $\beta_j' = \theta_j'$ for $j \geq 1$. For condition 1, recall $\theta_0 = \ln Z_i - \beta_0$ and $\gamma_i = \sigma(\theta_0) = \frac{Z_i}{e^{\beta_0} + Z_i}$. After updates:
\[
\theta_0' = \theta_0 + \eta A (1 - \gamma_i) = \ln Z_i - \beta_0 + \eta A (1 - \gamma_i).
\]
For the implicit side, $Z_i' = \sum_{s=1}^V e^{\beta_s'} = \sum_{s=1}^V e^{\theta_s'}$, and the desired relation is $\theta_0' = \ln Z_i' - \beta_0'$. Substituting:
\[
\begin{aligned}
\ln Z_i' - \beta_0' &= \ln Z_i' - \left( \beta_0 + \ln \left( \frac{Z_i'}{Z_i} \right) - \eta A (1 - \gamma_i) \right) \\
&= \ln Z_i' - \beta_0 - \ln Z_i' + \ln Z_i + \eta A (1 - \gamma_i) = \ln Z_i - \beta_0 + \eta A (1 - \gamma_i) = \theta_0',
\end{aligned}
\]
so $\theta_0' = \ln Z_i' - \beta_0'$ holds exactly. Thus, $\gamma_i' = \sigma(\theta_0') = \frac{Z_i'}{e^{\beta_0'} + Z_i'}$, satisfying condition 1 strictly.

\textbf{Summary:}

The surrogate loss function \(L_I'(\beta_i)\) for the implicit hierarchical policy is defined as follows:
\[
L_I'(\beta_i) = \begin{cases} 
- A \left[ \beta_0 - \log \left( \sum_{s=0}^V e^{\beta_s} \right) \right] - A \cdot \text{sg}(\gamma_i) \cdot \log Z_i, & \text{if } i=0 \text{ (E)}, \\
- A \left[ \beta_i - \log \left( \sum_{s=0}^V e^{\beta_s} \right) \right] - A \cdot \text{sg}(\gamma_i) \cdot \log Z_i + A \log Z_i - f_i \cdot \beta_0, & \text{if } i \geq 1 \text{ (C)},
\end{cases}
\]
\[
= - A \left[ \beta_i - \log \left( \sum_{s=0}^V e^{\beta_s} \right) \right] - A \cdot \text{sg}(\gamma_i) \cdot \log Z_i 
+ \left( A \log Z_i - f_i \cdot \beta_0 \right) \cdot \mathbb{I}\{i \geq 1\},
\]
where $\text{sg}(\cdot)$ denotes the stop-gradient operation, $\eta$ is learning rate, \(Z_i = \sum_{s=1}^V e^{\beta_s}\), \(\gamma_i = \frac{Z_i}{e^{\beta_0} + Z_i}\), \(Z_i' = \sum_{s=1}^V \exp\left( \beta_s + \eta A \left( \delta_{s i} - \text{softmax}_{1-V}(\beta_s) \right) \right)\), $f_i = \frac{1}{\eta} \ln \left( \text{sg}(\frac{Z_i'}{Z_i}) \right)$.

When employing the surrogate loss \(L_I'(\beta_i)\), a single gradient descent step for the implicit hierarchical policy yields parameters that produce an identical probability distribution to that of the explicit hierarchical policy updated with the policy gradient loss \(L_E(\theta_i)\). This preserves the initial conditions (Condition 1: \(\sigma(\theta_0) = \gamma_i = \frac{\sum_{s=1}^V e^{\beta_s}}{\sum_{s=0}^V e^{\beta_s}}\); Condition 2: \(\frac{e^{\beta_i}}{\sum_{s=1}^V e^{\beta_s}} = \frac{e^{\theta_i}}{\sum_{s=1}^V e^{\theta_s}}\)) exactly. By iterating this proof at each step, it ensures the probabilities remain equal in the next step, thereby guaranteeing that the gradient updates at each step maintain the same probability. This loss function can then implicitly train the explicit hierarchical strategy.
\end{proof}
\twocolumn

\section{Experimantal Details}
\label{appendix:exp_details}
\subsection{Datasets for IH-GRPO Main Experiments}
\label{appendix:dataset_details}
We summarize the number of samples in the training and evaluation datasets in Table~\ref{tab:datasets_statics}. Following SimpleTIR~\cite{zeng2025simplerl}, we mixed the Math3-5 and Deepscaler datasets for training. For the 1.7B model, which requires longer training to learn effective tool usage, we trained for 500 steps to ensure convergence. For the 4B and 8B models, which already exhibit stronger tool usage capabilities, we trained for 300 steps. The data used for training the reinforcement learning (RL) methods is shown in 1.7B-RL and 4/8B-RL. For the STaR method, we use the same data as in RL training and perform rollouts of 8 responses and filter the correct ones for supervised fine-tuning (SFT) training, applying this procedure in both decoupled and coupled tool invocation scenarios. The specific number of samples used for STaR training is shown in 1.7B-I/D-STaR, 4B-I/D-STaR and 8B-I/D-STaR.

\begin{table}[t]
\centering
\caption{Statistics of the training and test datasets used by all methods. `D` denotes decoupled tool invocation, and `C` denotes coupled tool invocation. The term "Logical Deduction" refers to "Logical Deduction Seven Objects".}
\label{tab:datasets_statics}
\small
\setlength{\tabcolsep}{4pt}
\renewcommand{\arraystretch}{1.1}
\begin{tabular}{lll}
\toprule
\textbf{Dataset} & \textbf{Description
} & \textbf{\# Train / Test} \\
\midrule
Math3-5            & Math Reasoning     & 8523 / - \\
Deepscaler            & Math Reasoning     & 40315 / - \\
1.7B-I-STaR            & Math Reasoning     & 10757 / - \\
1.7B-D-STaR            & Math Reasoning     & 13632 / - \\
4B-I-STaR            & Math Reasoning     & 6511 / - \\
4B-D-STaR            & Math Reasoning     & 10531 / - \\
8B-I-STaR            & Math Reasoning     & 6780 / - \\
8B-D-STaR            & Math Reasoning     & 10945 / - \\
1.7B-RL            & Math Reasoning     & 22328 / - \\
4/8B-RL            & Math Reasoning     & 13797 / - \\
AIME24   & Math Reasoning     & - / 30 \\
AIME25   & Math Reasoning     & - / 30 \\
MATH500            & Math Reasoning     & - / 500 \\
AMC23            & Math Reasoning     & - / 40 \\
Hmmt25          & Math Reasoning     & - / 30 \\
Olympiad           & Math Reasoning     & - / 674 \\
MMLU-pro           & General Ability     & - / 12032 \\
Logiqa           & Logical Reasoning     & - / 651 \\
Date Understanding           & Date Understanding     & - / 250 \\
Formal Fallacies           &  Logical Reasoning     & - / 250 \\
Logical Deduction           &  Logical Reasoning     & - / 250 \\
\bottomrule
\end{tabular}
\end{table}

\subsection{Training and Comparison Details}
\label{appendix:train_details}
Our framework is implemented on top of VeRL~\cite{sheng2024hybridflow} and SimpleTIR~\cite{xue2025simpletir}. For the STaR method, we sample 8 responses per prompt and filter the correct chain-of-thought (CoT) responses for SFT, using a learning rate of \(1 \times 10^{-5}\), training for 3 epochs, and a micro\_batch\_size\_per\_gpu of 2. The dataset distribution is summarized in Table~\ref{tab:datasets_statics}. For all RL-based methods, we adopt  a batch size of 16, a ppo\_mini\_batch\_size of 128, and a learning rate of \(1 \times 10^{-6}\). Due to these shared training strategies, the DAPO baseline behaves similarly to SimpleTIR. For EH-GRPO, we extract the final hidden layer from the reasoning model after each code block as input features and train a multi-layer perceptron agent using the REINFORCE algorithm to make tool execution decisions, employing the same reward for high-level control policy. For IH-GRPO, we use the \texttt{<tool\_call>} token to represent tool executions and set $\lambda = 1\times10^{-3}$ for Qwen3-1.7B and Qwen3-4B, and $\lambda = 1\times10^{-2}$ for Qwen3-8B. All other hyperparameters are kept consistent with the default settings of SimpleTIR. All experiments were conducted on 8$\times$H20 GPUs with 141GB of memory, using Python 3.10 and PyTorch 2.6. All models are optimized with the AdamW optimizer (\(\beta_1 = 0.9\), \(\beta_2 = 0.95\), weight decay 0.01) and accelerated via vLLM~\cite{kwon2023efficient}.

\subsection{Comparison of Computational Cost}
To compare the training overhead of different methods, we evaluate their total training cost using Qwen3-4B as the backbone model on an 8$\times$H20 GPU setup with 141\,GB total memory, as summarized in Table~\ref{tab:time_cost}. The STaR method requires additional time for preparing training data; owing to its reliance on SFT, its overall training time is relatively short, but the resulting performance gains are also limited. In contrast, RL-based methods generally achieve stronger performance improvements, at the expense of substantially longer training time. The EH-GRPO approach incurs fewer tool executions and shorter training time due to suboptimal coordination between the external high-level policy controller and low-level policy, which in turn leads to inferior performance. By comparison, IH-GRPO decouples tool invocation and execution, enriches tool-use patterns, and improves TIR capability. This may increase interaction turns and overall training time, but brings significant performance gains and achieves the best performance, which is consistent with our design goal.

\begin{table}[t]
\centering
\caption{Estimated computational time (hours) for various methods with Qwen3-4B. 'C' and 'D' represent coupled and decoupled tool invocation, respectively.}
\label{tab:time_cost}
\small
\setlength{\tabcolsep}{4pt}
\renewcommand{\arraystretch}{1.1}
\begin{tabular}{lcccc}
\toprule
\textbf{Method} & \textbf{Tool} & \textbf{Data Processing} & \textbf{Training} & \textbf{Total} \\
\midrule
STaR        & C & 9.4 & 3.4 & 12.8 \\
STaR        & D & 11.6 & 5.5 & 17.1 \\
SimpleTIR  & C & 0  & 31.9  & 31.9  \\
SimpleTIR  & D & 0  & 37.9  & 37.9  \\
Dr.GRPO       & C & 0  & 28.13  & 28.13  \\
Dr.GRPO       & D & 0  & 37.51  & 37.51  \\
DAPO       & C & 0  & 35.8  & 35.8  \\
DAPO       & D & 0  & 38.9  & 38.9  \\
EH-GRPO        & D & 0  & 24.5  & 24.5  \\
IH-GRPO & D & 0  & 37.3  & 37.2  \\
\bottomrule
\end{tabular}
\end{table}

\subsection{Tool Usage Analysis} 
\label{appendix:tool_useage_analysis}
We report the average number of tool execution for all methods, as summarized in Table~\ref{tab:qwen3-main-tooltime}. Under the decoupled tool invocation setting, models can maintain reasoning continuity as much as possible, allowing for comprehensive planning to decompose and solve multiple subproblems. In addition, decoupled tool invocation allows the model to inspect and verify the generated code before execution, as illustrated in the example in Figure~\ref{fig:8b_ih_grpo_case}, which has the potential to reduce the likelihood of tool usage errors. As a result, models tend to use tools more frequently to support reasoning, leading to improved reasoning performance. As shown in Figure~\ref{fig:delay_rate}, the proportion of delayed invocations further suggests that explicitly controlling invocation timing has the potential to reduce unnecessary tool-execution overhead.

\begin{table*}[t]
\centering
\caption{Average tool execution time of different methods based on Qwen3 models across six mathematical reasoning benchmarks. `D` denotes decoupled tool invocation, and `C` denotes coupled tool invocation.}
\label{tab:qwen3-main-tooltime}
\small
\renewcommand{\arraystretch}{1.1}
\setlength{\tabcolsep}{0pt}
\begin{tabular*}{\textwidth}{@{\extracolsep{\fill}}lcccccccc}
\toprule
\textbf{Method} & \textbf{Tool} & \textbf{AIME24} & \textbf{AIME25} & \textbf{MATH500} & \textbf{AMC23} & \textbf{Hmmt25} & \textbf{Olympiad} & \textbf{Average} \\
\midrule
\multicolumn{9}{c}{\textit{Models based on Qwen3-1.7B}} \\
\midrule
STaR & C & 0.36 & 0.38 & 0.34 & 0.36 & 0.39 & 0.32 & 0.36 \\
STaR & D & 0.34 & 0.33 & 0.51 & 0.37 & 0.42 & 0.51 & 0.41 \\
SimpleTIR & C & 0 & 0 & 0.01 & 0 & 0 & 0 & 0 \\
SimpleTIR & D & 0.51 & 0.40 & 0.75 & 0.58 & 0.79 & 0.58 & 0.60 \\
Dr.GRPO & C & 0.50 & 0.41 & 0.64 & 0.55 & 0.71 & 0.49 & 0.55 \\
Dr.GRPO & D & 0.52 & 0.44 & 0.53 & 0.59 & 0.80 & 0.53 & 0.57 \\
DAPO & C & 0.12 & 0.10 & 0.15 & 0.21 & 0.18 & 0.12 & 0.15 \\
DAPO & D & 0.36 & 0.33 & 0.53 & 0.47 & 0.61 & 0.42 & 0.45 \\
EH-GRPO & D & 0.14 & 0.08 & 0.11 & 0.15 & 0.33 & 0.14 & 0.16 \\
IH-GRPO (ours) & D & 0.61 & 0.38 & 0.47 & 0.50 & 0.87 & 0.50 & 0.56 \\
\midrule
\multicolumn{9}{c}{\textit{Models based on Qwen3-4B}} \\
\midrule
STaR & C & 0.71 & 0.73 & 0.27 & 0.50 & 0.63 & 0.42 & 0.54 \\
STaR & D & 0.81 & 0.73 & 0.49 & 0.67 & 0.95 & 0.59 & 0.71 \\
SimpleTIR & C & 0.74 & 0.78 & 0.95 & 0.96 & 0.79 & 0.85 & 0.85 \\
SimpleTIR & D & 1.08 & 1.11 & 1.17 & 1.22 & 1.24 & 1.06 & 1.15 \\
Dr.GRPO & C & 0.68 & 0.64 & 0.84 & 0.86 & 0.75 & 0.70 & 0.75 \\
Dr.GRPO & D & 0.90 & 1.00 & 1.09 & 1.06 & 1.02 & 0.96 & 1.01 \\
DAPO & C & 0.39 & 0.46 & 0.36 & 0.39 & 0.60 & 0.39 & 0.43 \\
DAPO & D & 0.98 & 0.91 & 1.15 & 1.12 & 1.00 & 1.00 & 1.03 \\
EH-GRPO & D & 0.17 & 0.21 & 0.34 & 0.27 & 0.25 & 0.21 & 0.24 \\
IH-GRPO (ours) & D & 1.15 & 1.06 & 1.13 & 1.15 & 1.24 & 1.09 & 1.14 \\
\midrule
\multicolumn{9}{c}{\textit{Models based on Qwen3-8B}} \\
\midrule
STaR & C & 0.66 & 0.50 & 0.31 & 0.50 & 0.60 & 0.41 & 0.50 \\
STaR & D & 1.46 & 1.35 & 2.00 & 1.09 & 1.33 & 1.17 & 1.40 \\
SimpleTIR & C & 0.67 & 0.69 & 0.53 & 0.45 & 1.00 & 0.63 & 0.66 \\
SimpleTIR & D & 1.23 & 1.19 & 1.28 & 1.34 & 1.38 & 1.23 & 1.28 \\
Dr.GRPO & C & 0.89 & 0.87 & 0.98 & 0.93 & 0.98 & 0.91 & 0.93 \\
Dr.GRPO & D & 1.23 & 1.24 & 1.17 & 1.31 & 1.33 & 1.17 & 1.24 \\
DAPO & C & 1.00 & 1.09 & 0.89 & 0.76 & 1.16 & 0.83 & 0.96 \\
DAPO & D & 1.09 & 1.11 & 1.11 & 1.14 & 1.08 & 1.02 & 1.09 \\
EH-GRPO & D & 0.20 & 0.20 & 0.24 & 0.23 & 0.32 & 0.22 & 0.24 \\
IH-GRPO (ours) & D & 1.26 & 1.35 & 1.43 & 1.40 & 1.38 & 1.30 & 1.35 \\
\bottomrule
\end{tabular*}
\end{table*}

\begin{figure}[t]
    \centering
    \includegraphics[width=1.0\columnwidth]{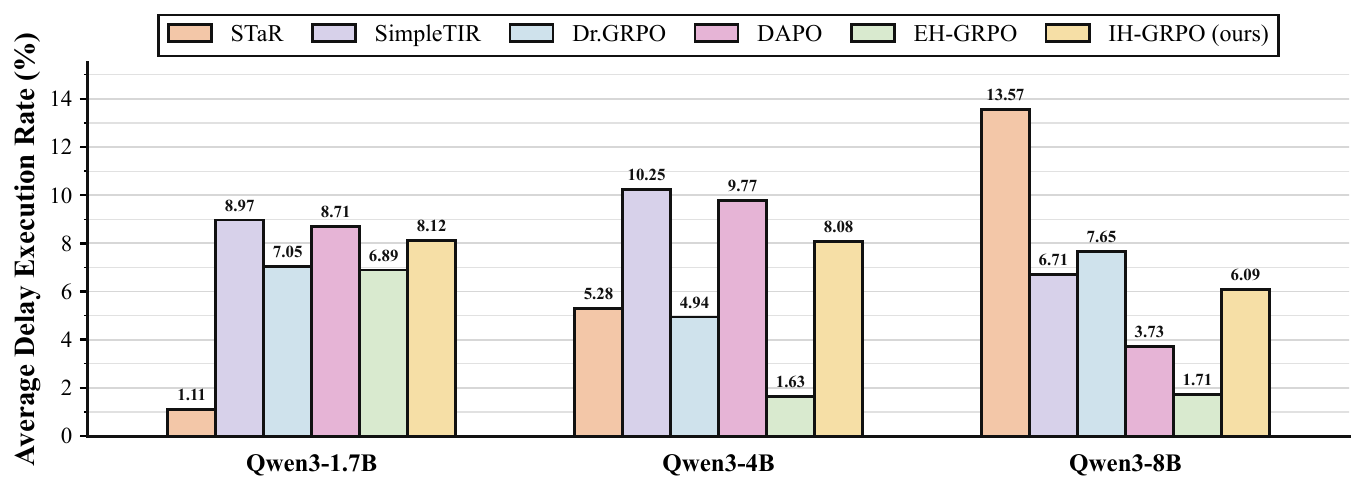}
    \caption{Average delay execution rate (\%).}
    \label{fig:delay_rate}
\end{figure}

\subsection{Training Dynamics}
\label{appendix:train_dynamic}
We analyze the training dynamics of SimpleTIR and IH-GRPO on Qwen3-8B by reporting the average number of valid code executions during training and the accuracy on the AMC23, as illustrated in Figure~\ref{fig:training_dynamics}. Compared to SimpleTIR, IH-GRPO consistently achieves higher test accuracy while maintaining a lower rate of invalid code executions. These results empirically validate the effectiveness of implicit hierarchical control and decoupled tool invocation, which together enhance reasoning reliability and training stability.
\begin{figure}[t]
    \centering
    \begin{subfigure}[t]{0.8\linewidth}
        \centering
        \includegraphics[width=\linewidth]{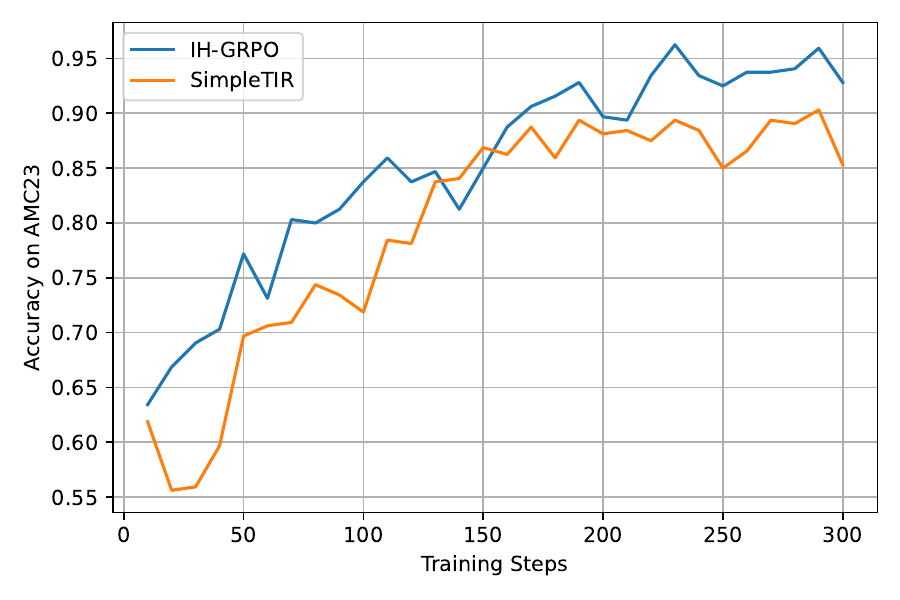}
        \caption{Accuracy on AMC23.}
        \label{fig:amc23_accuracy}
    \end{subfigure}
    \begin{subfigure}[t]{0.8\linewidth}
        \centering
        \includegraphics[width=\linewidth]{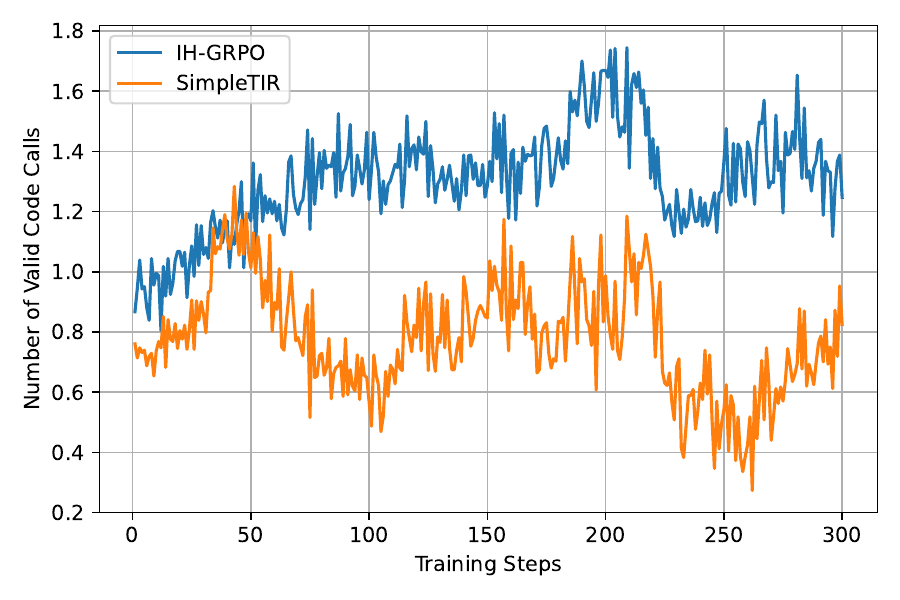}
        \caption{Average turn of valid code executions.}
        \label{fig:num_valid_code}
    \end{subfigure}
    \caption{Training dynamics of SimpleTIR and IH-GRPO on Qwen3-8B.}
    \label{fig:training_dynamics}
\end{figure}

\section{Out-of-Distribution Evaluation}
\label{sec:ood_evaluation}
To assess whether tool-oriented training impacts model performance in other domains, we evaluate both the base Qwen3 models and their IH-GRPO counterparts on a diverse set of out-of-distribution (OOD) benchmarks. The evaluation suite includes MMLU-Pro~\cite{wang2024mmlu}, LogiQA~\cite{liu2020logiqa}, as well as three challenging reasoning tasks from the BIG-Bench Hard (BBH) benchmark~\cite{suzgun2022challenging}, namely Date Understanding, Formal Fallacies, and Logical Deduction (Seven Objects). Dataset statistics are summarized in Table~\ref{tab:datasets_statics}. Given the relatively large number of evaluation datasets, we adopt the average@1 metric for all OOD evaluations, with the sampling temperature fixed at 1.0. The prompt templates used for evaluation are provided in Appendix~\ref{appendix:prompt_details}. As shown in Table~\ref{tab:ood_eval}, IH-GRPO training does not result in performance degradation on any of the five OOD benchmarks. Instead, we observe clear and consistent performance improvements across all tasks, indicating that hierarchical tool training with IH-GRPO preserves general reasoning capabilities and can substantially benefit performance on certain OOD tasks.

\begin{table*}[t]
\centering
\caption{Out-of-distribution evaluation results on Qwen3 models (where ``Logical Deduction'' refers to ``Logical Deduction Seven Objects''). \textbf{Bold} values indicate the best performance.}
\label{tab:ood_eval}
\small
\renewcommand{\arraystretch}{1.1}
\setlength{\tabcolsep}{6pt}
\begin{tabular}{lcccccc}
\toprule
\textbf{Method} & \textbf{MMLU-Pro} & \textbf{LogiQA} & \textbf{Date Understanding} & \textbf{Formal Fallacies} & \textbf{Logical Deduction} & \textbf{Avg.} \\
\midrule
\multicolumn{7}{c}{\textit{Models based on Qwen3-1.7B}} \\
\midrule
Base            & 33.20 & 45.01 & 55.20 & 58.80 & 52.80 & 49.00\\
IH-GRPO (ours)  & \textbf{48.75} & \textbf{56.99} & \textbf{78.00} & \textbf{76.00} & \textbf{80.80} & \textbf{68.11}\\
\midrule
\multicolumn{7}{c}{\textit{Models based on Qwen3-4B}} \\
\midrule
Base            & 36.30 & 57.76 & 64.80 & 86.40 & 74.80 & 64.01\\
IH-GRPO (ours)  & \textbf{39.16} & \textbf{63.29} & \textbf{79.60} & \textbf{92.40} & \textbf{85.60} & \textbf{72.01}\\
\midrule
\multicolumn{7}{c}{\textit{Models based on Qwen3-8B}} \\
\midrule
Base            & 29.19 & 51.15 & 64.80 & 96.00 & 73.20 & 62.87\\
IH-GRPO (ours)  & \textbf{51.01} & \textbf{66.67} & \textbf{90.00} & \textbf{97.60} & \textbf{91.60} & \textbf{79.38}\\
\bottomrule
\end{tabular}
\end{table*}

\section{Pseudocode of IH-GRPO}
\label{appendix:pseudocode}
We present the pseudocode of IH-GRPO in Algorithm~\ref{alg:ih_grpo}.

\begin{algorithm*}[t]
\caption{IH-GRPO: Implicitly Hierarchical Group Relative Policy Optimization}
\begin{algorithmic}[1]
\REQUIRE Initial policy model $\pi_{\theta_{\text{init}}}$; reward function $r_\phi$; task prompts $\mathcal{D}$;
hyperparameters $\epsilon, \beta, \mu$;
accuracy thresholds $\texttt{acc\_filter\_low}, \texttt{acc\_filter\_high}$
\ENSURE Optimized policy model $\pi_\theta$

\STATE Initialize policy model $\pi_\theta \leftarrow \pi_{\theta_{\text{init}}}$

\FOR{iteration $= 1, \ldots, I$}
    \STATE Set reference model $\pi_{\text{ref}} \leftarrow \pi_\theta$

    \FOR{step $= 1, \ldots, M$}
        \STATE Sample a batch of prompts $\mathcal{D}_b \sim \mathcal{D}$
        \STATE Update old policy $\pi_{\theta_{\text{old}}} \leftarrow \pi_\theta$

        \STATE Sample $G$ outputs $\{o_i\}_{i=1}^G \sim \pi_{\theta_{\text{old}}}(\cdot \mid q)$ for each $q \in \mathcal{D}_b$
        \STATE Compute rewards $\{r_i\}_{i=1}^G$ for each output $o_i$ using $r_\phi$

        \STATE Compute group-wise accuracy $\text{Acc}_g$ and token-level advantages $\hat{A}_{i,t}$ via group relative advantage estimation

        \STATE \textbf{Data Filtering:}
        \STATE Remove samples satisfying any of the following:
        \STATE \quad (1) responses contains \texttt{void turn}
        \STATE \quad (2) $\forall i \in \text{group } g, \; \text{advantage } \hat{A}_{g,i} = 0$
        \STATE \quad (3) $\text{Acc}_g > \texttt{acc\_filter\_high}$ or $\text{Acc}_g < \texttt{acc\_filter\_low}$

        \FOR{IH-GRPO iteration $= 1, \ldots, \mu$}
            \STATE Update policy $\pi_\theta$ by maximizing the IH-GRPO objective $\mathcal{J}_{\text{IH-GRPO}}$
        \ENDFOR
    \ENDFOR
\ENDFOR

\RETURN $\pi_\theta$
\end{algorithmic}
\label{alg:ih_grpo}
\end{algorithm*}

\section{Use of Large Language Models}
Some parts of the text were refined with the assistance of large language models (LLMs). All content and responsibility for the work remain with the authors.

\section{Prompts}
\label{appendix:prompt_details}
For the decoupled tool invocation scenario, we use the prompt template illustrated in Figure~\ref{fig:delay_prompt}. In Section~\ref{sec:ablation_study}, we also try an alternative prompt template shown in Figure~\ref{fig:delay_prompt2}. For the coupled tool invocation scenario, we adopt the prompt templates from SimpleTIR, as shown in Figure~\ref{fig:immediate_prompt}. In addition, for OOD evaluation, we employ the prompt templates shown in Figure~\ref{fig:eval_prompt}.

\begin{figure*}[htbp]
    \centering
    \includegraphics[width=1\textwidth]{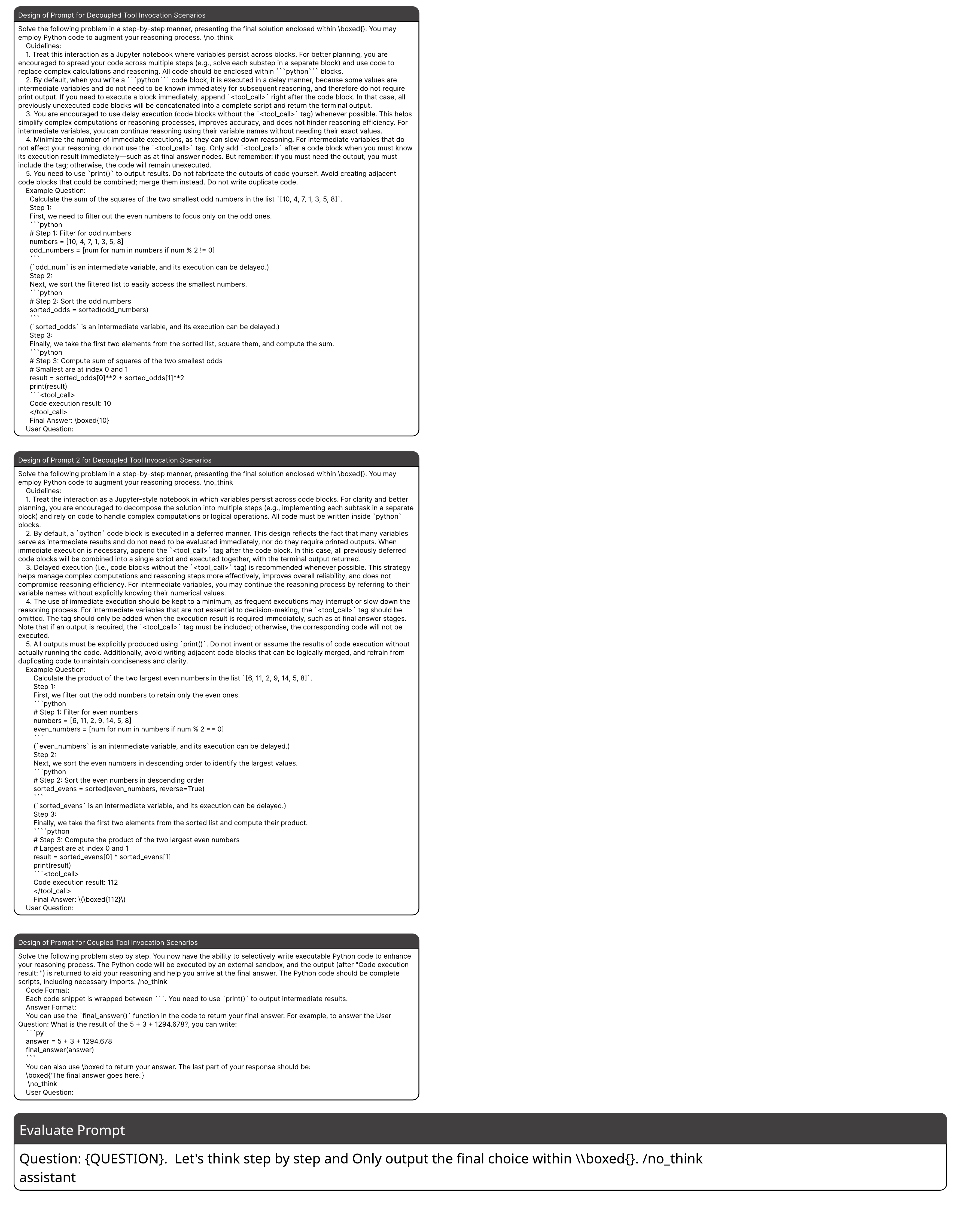}
    \caption{Design of Prompt for Decoupled Tool Invocation Scenarios}
    \label{fig:delay_prompt}
\end{figure*}

\begin{figure*}[htbp]
    \centering
    \includegraphics[width=1\textwidth]{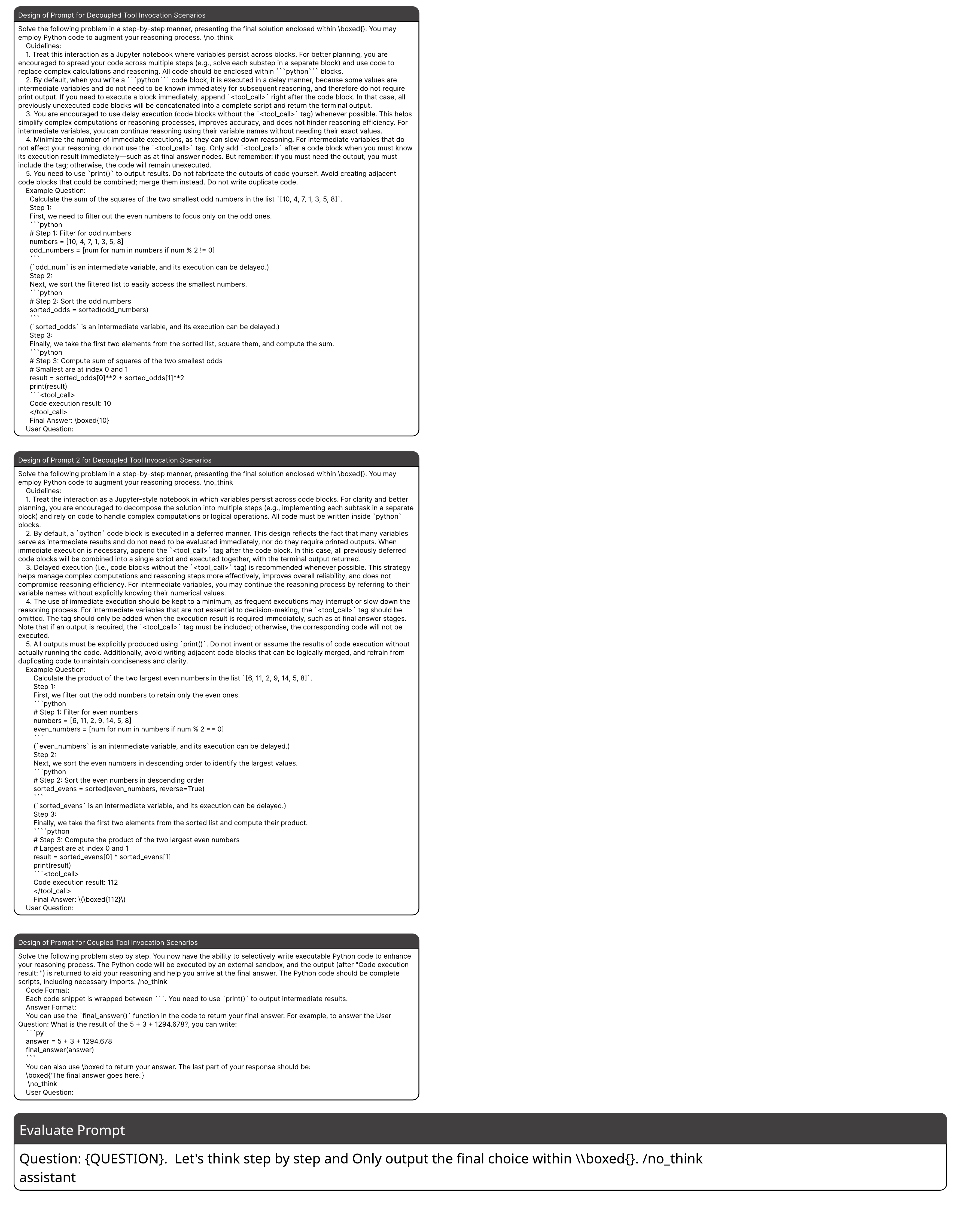}
    \caption{Design of Prompt 2 for Decoupled Tool Invocation Scenarios}
    \label{fig:delay_prompt2}
\end{figure*}

\begin{figure*}[htbp]
    \centering
    \includegraphics[width=1\textwidth]{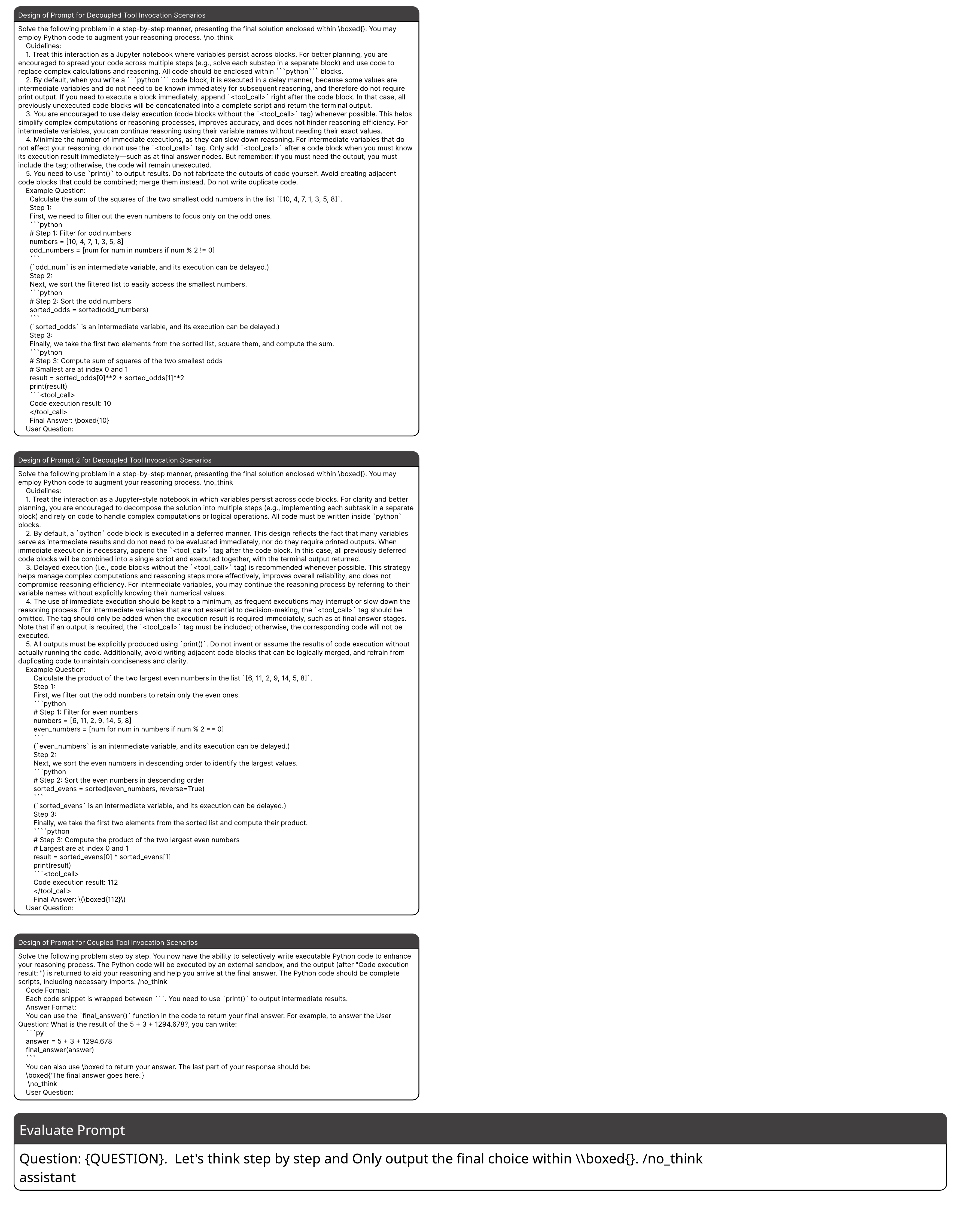}
    \caption{Design of Prompt for Coupled Tool Invocation Scenarios}
    \label{fig:immediate_prompt}
\end{figure*}

\begin{figure*}[htbp]
    \centering
    \includegraphics[width=1\textwidth]{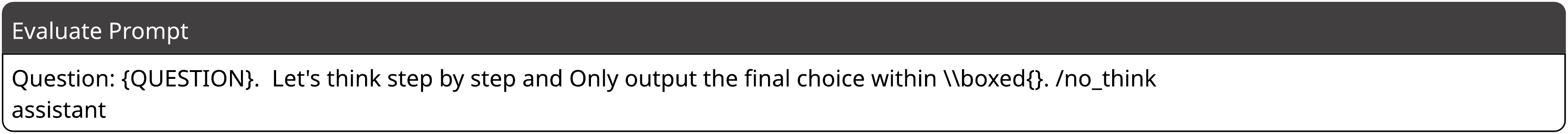}
    \caption{Design of Prompts for Out-of-Distribution (OOD) Evaluation}
    \label{fig:eval_prompt}
\end{figure*}

\section{Case Study}
\label{appendix:case_study}
We present qualitative examples of the Qwen3-8B base model under coupled and decoupled tool invocation settings. In Example 1, as shown in Figure~\ref{fig:8b_base_case1_part1}, under the coupled tool invocation setting, the model defines a function that is executed immediately. Since the function is not invoked and the code does not contain any other executable outputs, the execution returns an empty result, which interferes with the subsequent reasoning process. In the next reasoning step, the model heuristically returns "None" and prematurely terminates the reasoning process, resulting in an incorrect answer. In contrast, under the decoupled tool invocation setting shown in Figure~\ref{fig:8b_base_case1_part2}, the model first defines the function without executing it and continues its reasoning. The previously defined function is then used in subsequent code, and execution is deferred until the reasoning is complete, ensuring reasoning continuity and yielding the correct result. In Example 2, as illustrated in Figure~\ref{fig:8b_base_case2_part1}, under the coupled tool invocation setting, the model first conducts reasoning in text and subsequently attempts to solve the problem through a single code execution. Although the model correctly identifies that different probabilities apply under the 'W', 'D', and 'L' conditions during textual reasoning, this condition is inadvertently omitted during the code-based computation, resulting in an incorrect outcome. By contrast, in the decoupled tool invocation setting shown in Figure~\ref{fig:8b_base_case2_part2}, the model flexibly interleaves textual and code-based reasoning, decomposing the probability computation and filtering into separate substeps. This structured decomposition reduces reasoning complexity and leads to the correct answer. In Example 3, as shown in Figure~\ref{fig:8b_base_case3_part1}, under the coupled tool invocation setting, the model relies on manual calculations for complex computations, leading to erroneous intermediate results and an incorrect final answer. In comparison, under the decoupled tool invocation setting in Figure~\ref{fig:8b_base_case3_part2}, the model interleaves textual reasoning with code execution more effectively and delegates computationally intensive operations to code, improving numerical accuracy and ultimately obtaining the correct answer. Collectively, these examples provide qualitative evidence for the advantages of decoupled tool invocation in supporting coherent reasoning and reliable tool usage.

\begin{figure*}[htbp]
    \centering
    \includegraphics[width=1\textwidth]{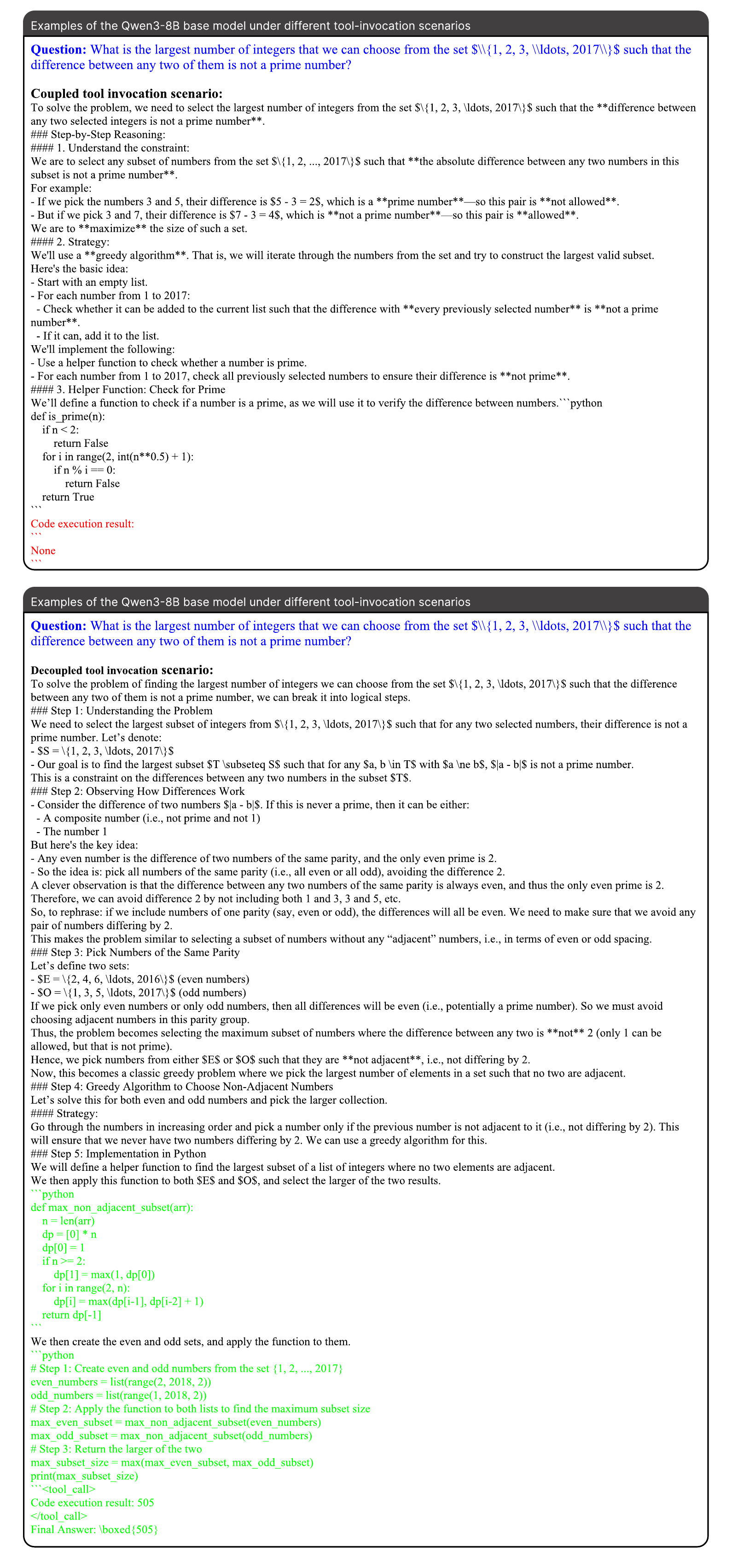}
    \caption{Example 1 of Qwen3-8B model in Coupled Tool Invocation Scenarios}
    \label{fig:8b_base_case1_part1}
\end{figure*}

\begin{figure*}[htbp]
    \centering
    \includegraphics[width=1\textwidth]{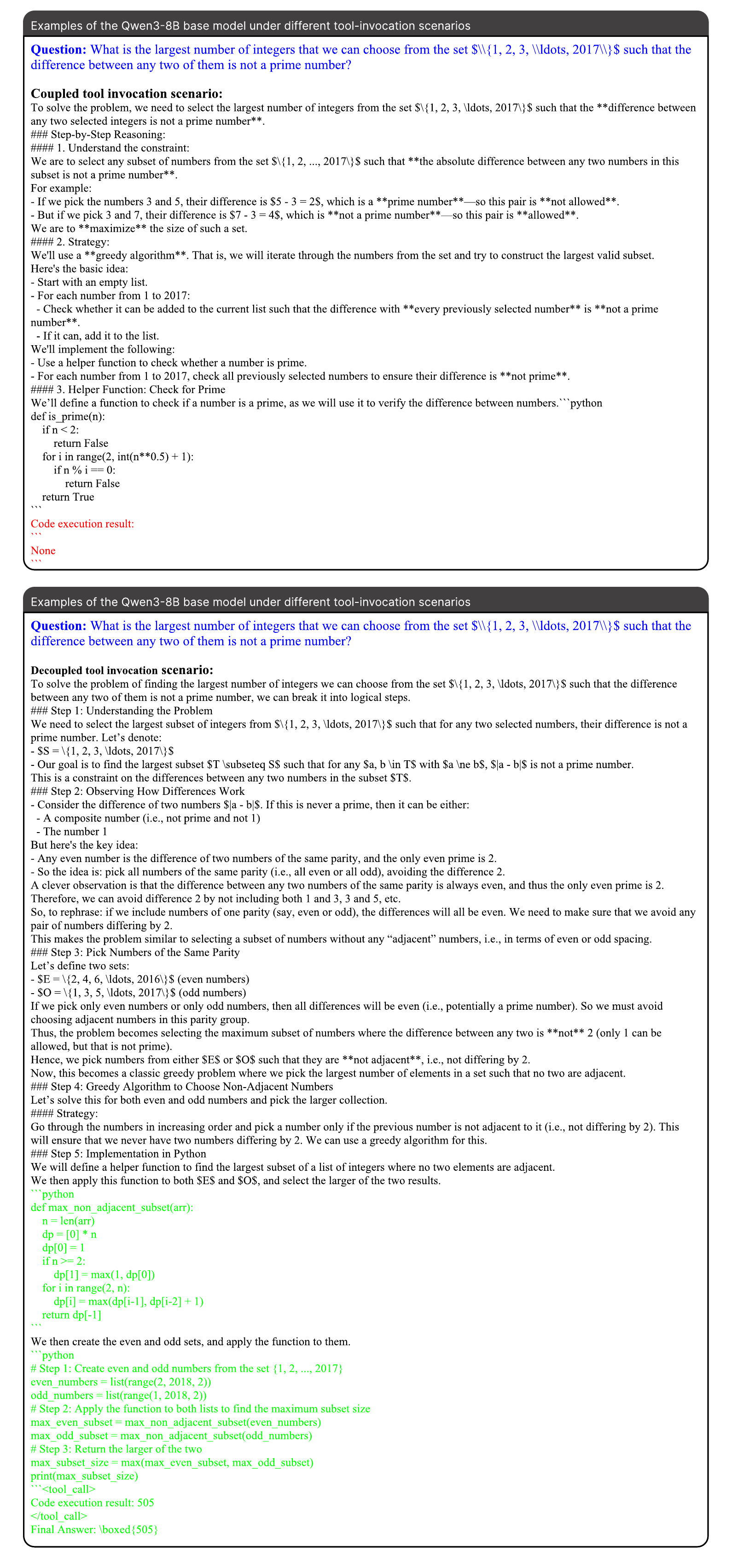}
    \caption{Example 1 of Qwen3-8B model in Decoupled Tool Invocation Scenarios}
    \label{fig:8b_base_case1_part2}
\end{figure*}

\begin{figure*}[htbp]
    \centering
    \includegraphics[width=1\textwidth]{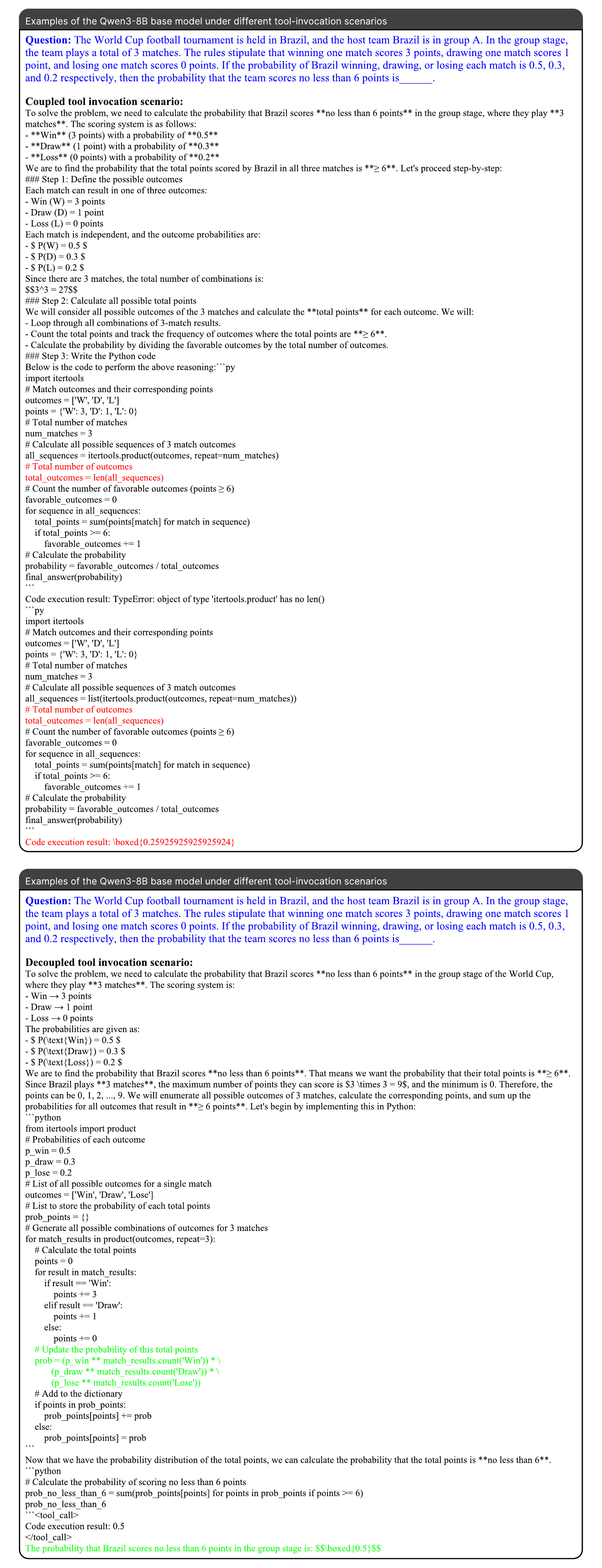}
    \caption{Example 2 of Qwen3-8B model in Coupled Tool Invocation Scenarios}
    \label{fig:8b_base_case2_part1}
\end{figure*}

\begin{figure*}[htbp]
    \centering
    \includegraphics[width=1\textwidth]{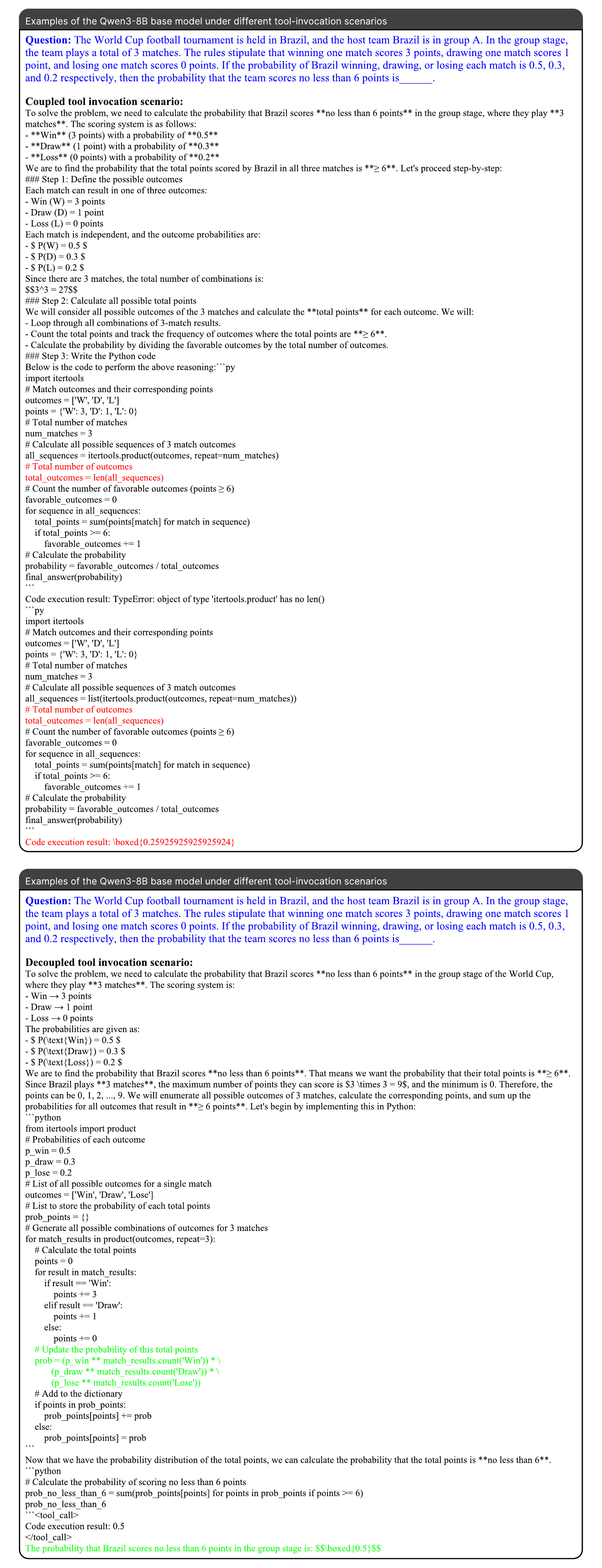}
    \caption{Example 2 of Qwen3-8B model in Decoupled Tool Invocation Scenarios}
    \label{fig:8b_base_case2_part2}
\end{figure*}

\begin{figure*}[htbp]
    \centering
    \includegraphics[width=1\textwidth]{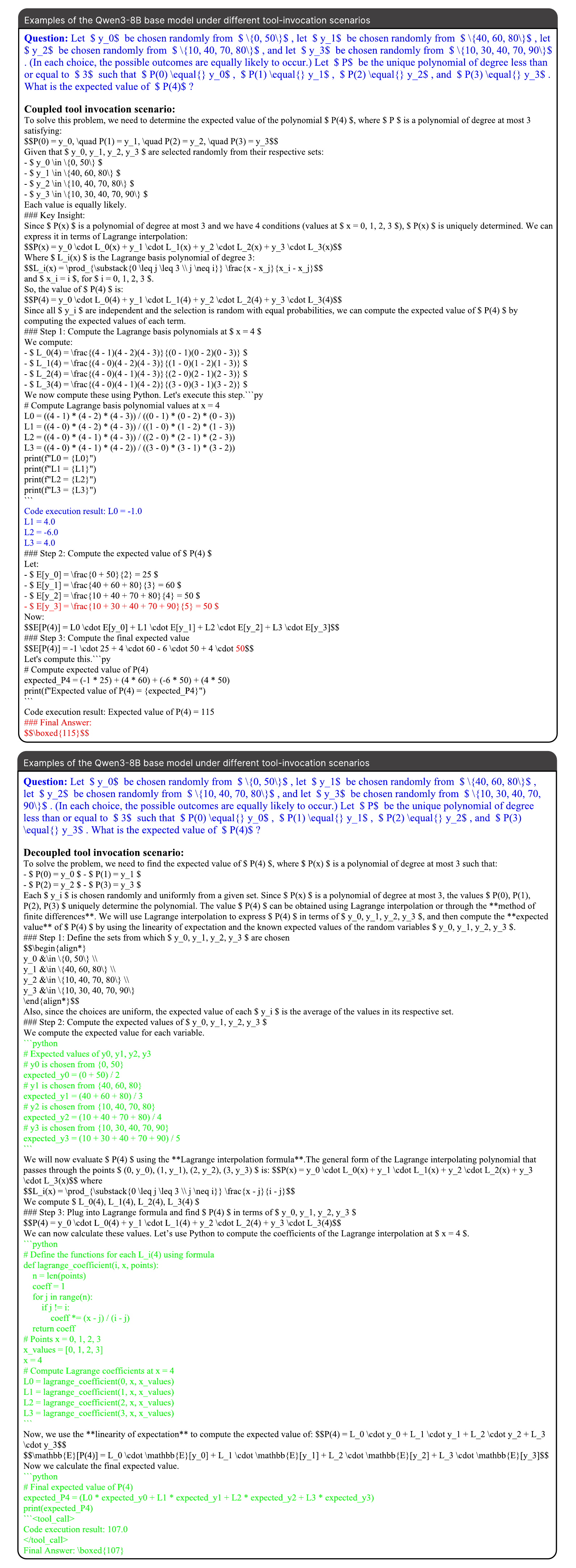}
    \caption{Example 3 of Qwen3-8B model in Coupled Tool Invocation Scenarios}
    \label{fig:8b_base_case3_part1}
\end{figure*}

\begin{figure*}[htbp]
    \centering
    \includegraphics[width=1\textwidth]{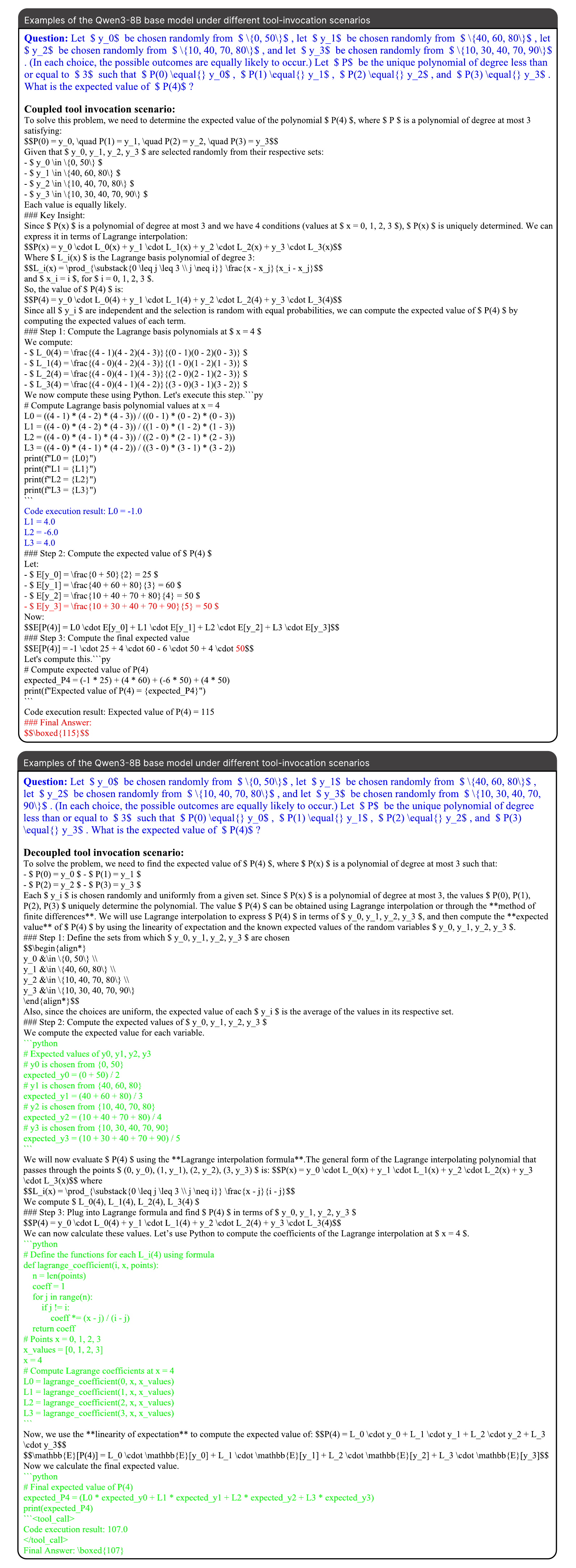}
    \caption{Example 3 of Qwen3-8B model in Decoupled Tool Invocation Scenarios}
    \label{fig:8b_base_case3_part2}
\end{figure*}

We present representative output examples of the Qwen3-1.7B, 4B, and 8B models after training with IH-GRPO, as shown in Figures~\ref{fig:1.7b_ih_grpo_case}, \ref{fig:4b_ih_grpo_case}, and \ref{fig:8b_ih_grpo_case}, respectively. In the example shown in Figure~\ref{fig:1.7b_ih_grpo_case}, the model alternates between textual and code-based reasoning in a stepwise manner, using code to perform substep-level computations at each stage, and ultimately arrives at the correct answer. In the example in Figure~\ref{fig:4b_ih_grpo_case}, the model references previously written code snippets during textual reasoning for analysis, without requiring actual execution, and still produces the correct result. In the example in Figure~\ref{fig:8b_ih_grpo_case}, after drafting a piece of code, the model recognizes through self-reflection that the initial coding approach is flawed, abandons execution, and instead adopts an alternative reasoning strategy by writing and executing revised code, which leads to the correct answer. Overall, decoupled tool invocation enables the model to reason over code without immediate execution and to verify correctness through self-reflection before deciding whether to execute, or even to discard incorrect code, thereby flexibly controlling execution and improving reasoning performance.

\begin{figure*}[htbp]
    \centering
    \includegraphics[width=1\textwidth]{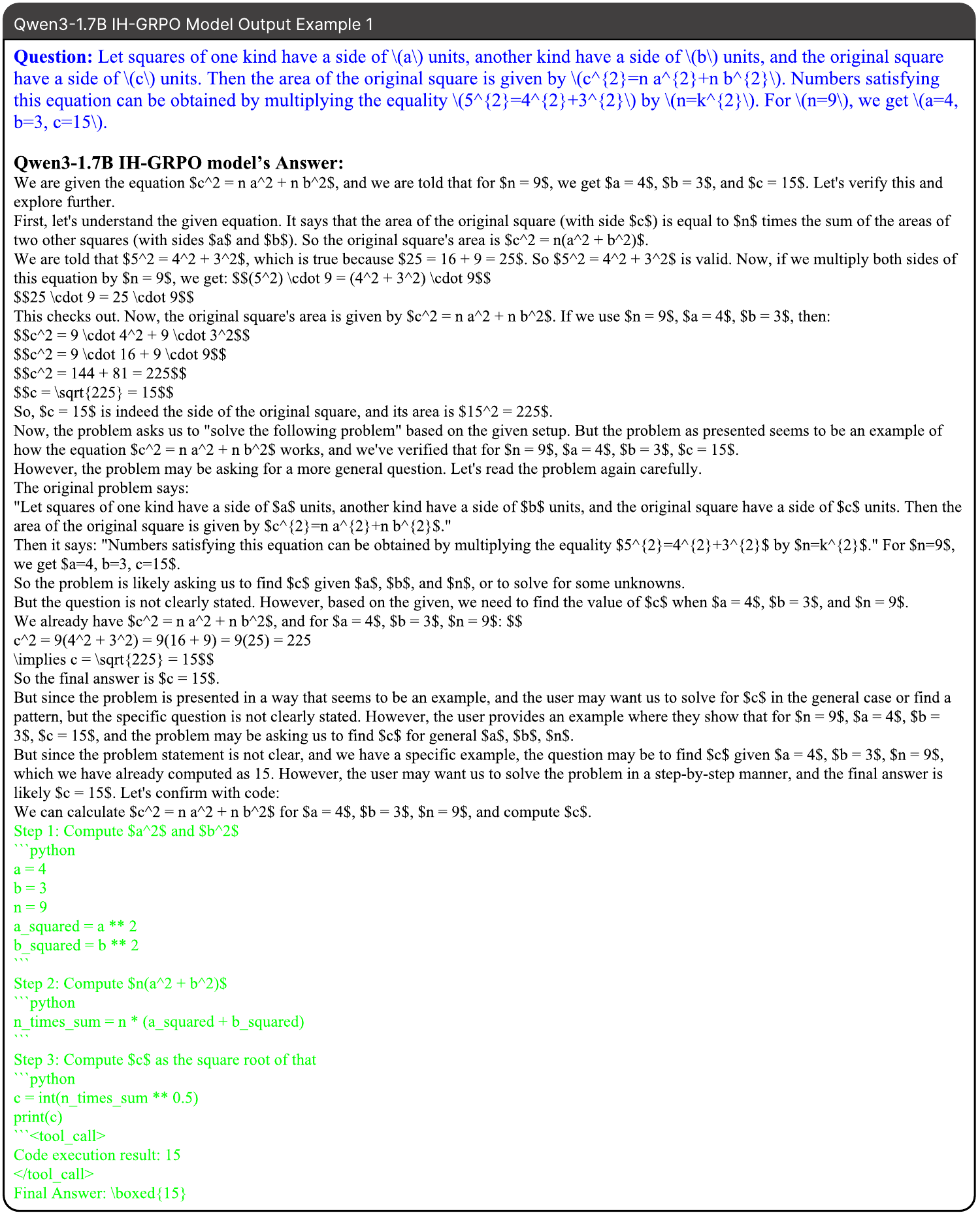}
    \caption{Output Example 1 of Qwen3-1.7B IH-GRPO model}
    \label{fig:1.7b_ih_grpo_case}
\end{figure*}

\begin{figure*}[htbp]
    \centering
    \includegraphics[width=1\textwidth]{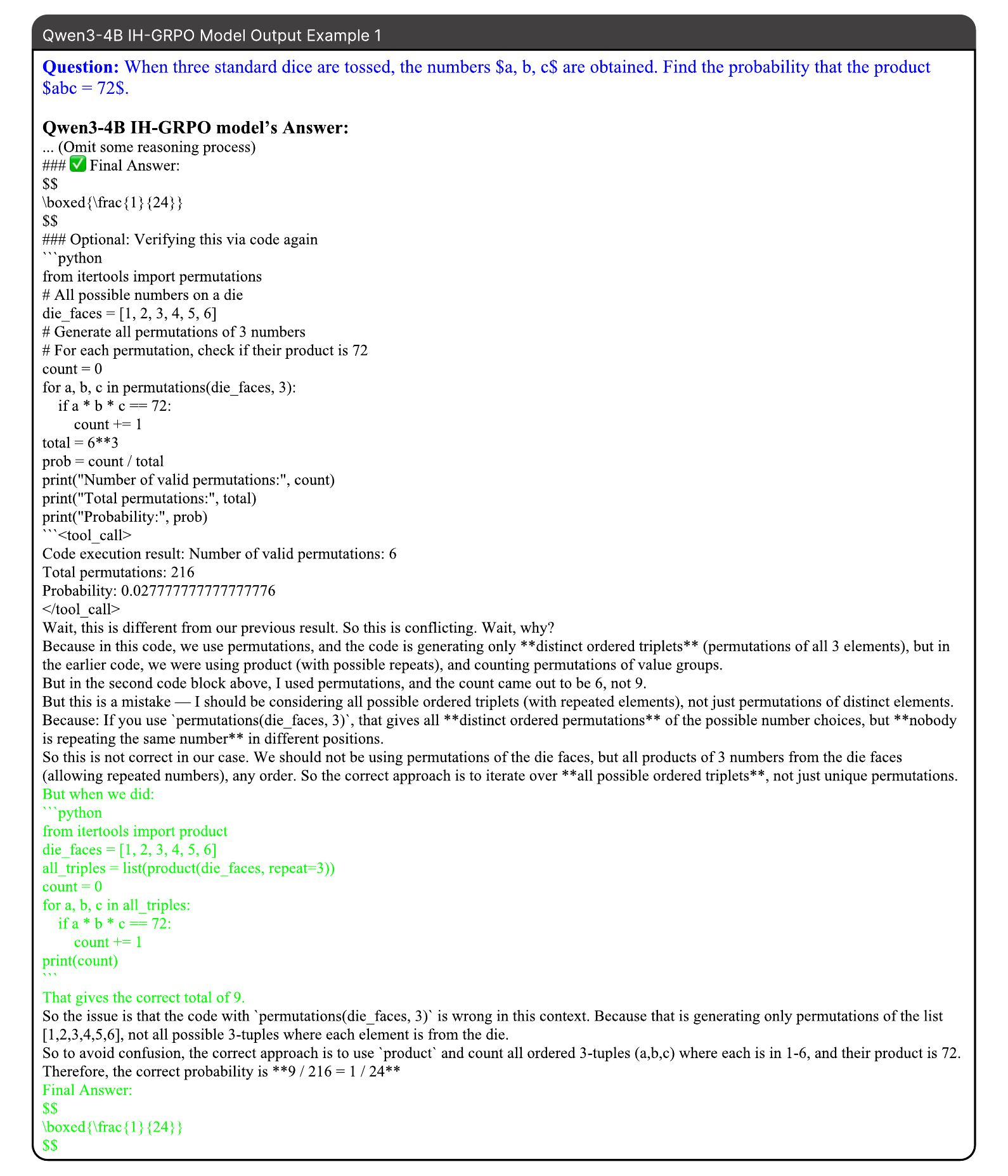}
    \caption{Output Example 1 of Qwen3-4B IH-GRPO model}
    \label{fig:4b_ih_grpo_case}
\end{figure*}

\begin{figure*}[htbp]
    \centering
    \includegraphics[width=1\textwidth]{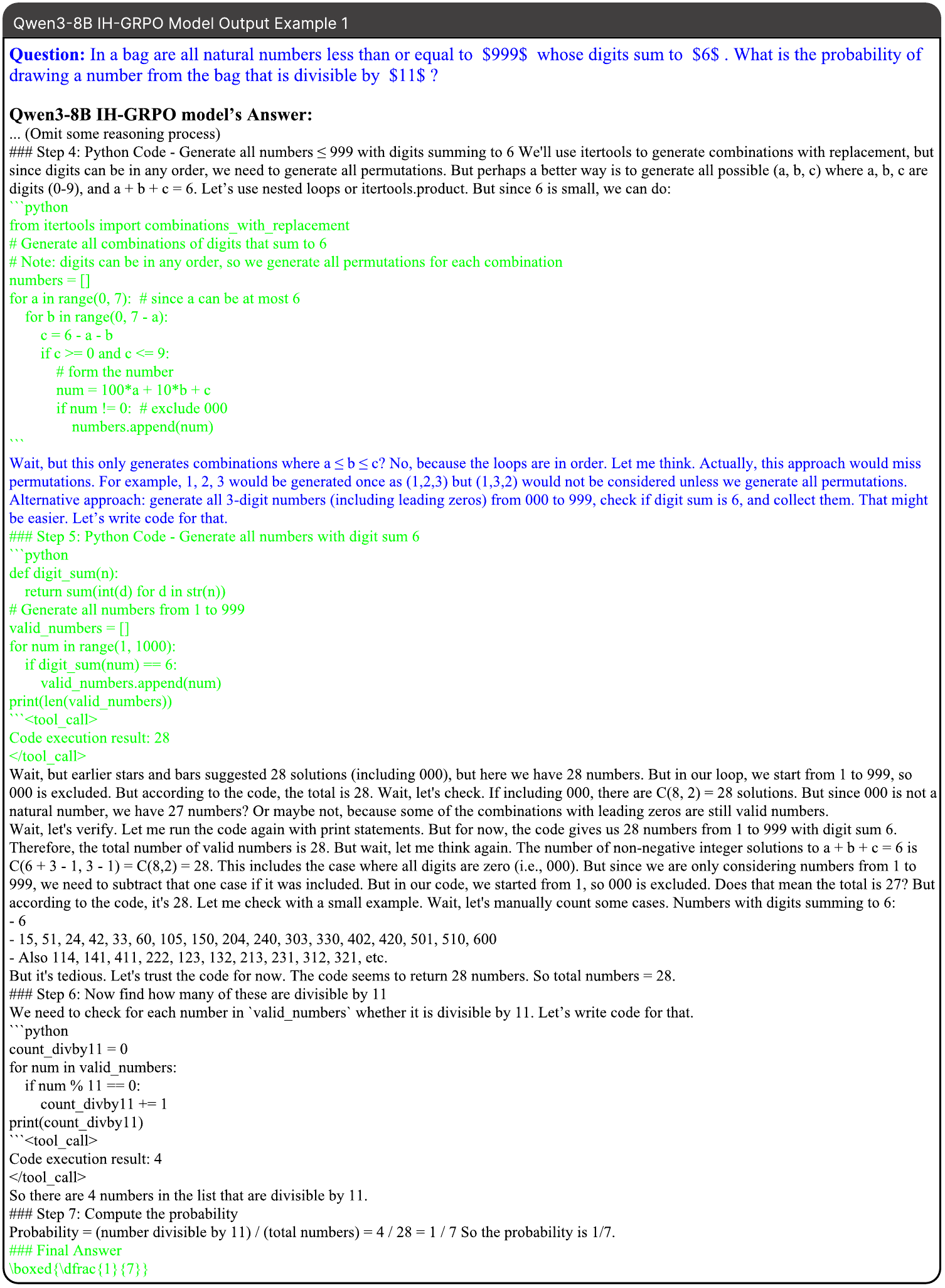}
    \caption{Output Example 1 of Qwen3-8B IH-GRPO model}
    \label{fig:8b_ih_grpo_case}
\end{figure*}

\end{document}